\documentclass[letterpaper]{article} 
\usepackage[preprint]{aaai2027}  
\usepackage[hyphens]{url}  
\usepackage{graphicx} 
\usepackage{natbib}  
\usepackage{caption} 
\usepackage{algorithm}
\usepackage{algorithmic}

\usepackage{newfloat}
\usepackage{listings}
\DeclareCaptionStyle{ruled}{labelfont=normalfont,labelsep=colon,strut=off} 
\floatstyle{ruled}
\newfloat{listing}{tb}{lst}{}
\floatname{listing}{Listing}

\usepackage{booktabs}

\usepackage{amsmath}
\usepackage{amsthm}
\usepackage{amssymb}
\usepackage{multirow}
\usepackage{rotating}
\usepackage{pifont}
\usepackage{dsfont}
\usepackage{colortbl}
\usepackage{makecell}

\usepackage{tikz}

\usepackage{subcaption}

\def\B{{\boldsymbol B}}

\def\G{{\boldsymbol G}}
\def\g{{\boldsymbol g}}

\def\S{{\boldsymbol S}}

\def\x{{\boldsymbol x}}

\def\y{{\boldsymbol y}}

\def\z{{\boldsymbol z}}

\def\BM{{\mathcal B}}

\def\DM{{\mathcal D}}

\def\FM{{\mathcal F}}

\def\LM{{\mathcal L}}

\newcommand{\tikzcmark}{%
\tikz[scale=0.23] {
    \draw[line width=1,line cap=round] (0.25,0) to [bend left=10] (1,1);
    \draw[line width=1,line cap=round] (0,0.35) to [bend right=1] (0.23,0);
}}

\newcommand{\tikzxmark}{%
\tikz[scale=0.23] {
    \draw[line width=1,line cap=round] (0,0) to [bend left=6] (1,1);
    \draw[line width=1,line cap=round] (0.2,0.95) to [bend right=3] (0.8,0.05);
}}

\newcommand{\std}[1]{{\scriptsize$\pm$#1}}
\newcommand{\unimodal}[1]{\cellcolor{gray!25}#1}

\newtheorem{theorem}{Theorem}

\newtheorem{lemma}{Lemma}

\title{Learning Through Game: Skewed Transfer of Tabular Knowledge to Strengthen Image Model}
\author{\textbf{Longfei Huang}$^{1}$\ \
\textbf{Shangdong Yang}$^{2}$\ \
\textbf{Yang Yang}$^{1}$\thanks{Corresponding author.}
}
\affiliations{
    $^{1}$Nanjing University of Science and Technology\\
    $^{2}$Nanjing University of Posts and Telecommunications
}

\begin{document}

\maketitle

\begin{abstract}
Multimodal tabular-image learning is gaining growing attention, yet it faces challenges due to tabular data unavailable at test time. A practical solution involves transferring tabular knowledge to images during training to enhance the performance of image models at inference. However, the overlooked yet important challenges lie in the modality imbalance between images and tables, as well as their asymmetric modality relationship in cross-modal transfer, which limits the auxiliary role of tabular data. To address these issues, we propose Skewed Knowledge Transfer (SKT), which asymmetrically transfers tabular knowledge to improve the image model by adaptive integration of modality gradients in a shared parameter space. Specifically, we first introduce a multimodal shared head, which allows the model to benefit from cross-modal structure without adding additional parameters. We then design a two-step Nash Bargaining strategy to effectively leverage tabular gradients. In the first step, SKT seeks a point of modality balance and uses preference awareness in the second step to steer combined gradients toward image-beneficial directions. Furthermore, we theoretically analyze the Pareto improvement and convergence of SKT. To this end, tabular knowledge is explicitly transferred to enhance image models. Empirical experiments on widely used tabular-image datasets reveal that SKT consistently improves image unimodal performance by using tabular data as auxiliary information.
\end{abstract}

\section{Introduction}
Tabular-image learning \cite{MML:conf/miccai/ZhengLZPXZJW22,MML:conf/miccai/XuZHZM16,TIL:conf/miccai/GrzeszczykKASTS24,TIP:conf/eccv/DuZWBOQ24,TIL:conf/aaai/FuZZCJ26,TIL:journals/corr/abs-2602-20223} aims to leverage structured tabular data to improve the holistic understanding of visual tasks, which has gained increasing attention in various fields, such as healthcare \cite{Healthcare:conf/kdd/ZhangCMZWWZ22,Healthcare:acosta2022multimodal,TIL:conf/wacv/HasnyFBS26} and marketing \cite{Market:conf/www/HeM16,DVM:conf/bigdataconf/HuangCLYO22}, and so on. Most existing methods adopt multimodal fusion to integrate information from tabular data and images. Despite numerous achievements, these methods assume that all modalities are available at test time, whereas obtaining real-world tabular data is often more challenging and costly \cite{MMCL:conf/cvpr/HagerMR23,CHARMS:conf/icml/JiangYW00Z24,TIL:journals/corr/abs-2512-19602}. For example, in healthcare, obtaining medical images depends on equipment, whereas deriving detailed and accurate diagnoses demands expert knowledge and is often more expensive and challenging \cite{TIL:journals/corr/abs-2510-15208}.

A practical solution is to leverage both modalities during training to facilitate the transfer of tabular knowledge to the image model, thereby enhancing its inference performance \cite{MMCL:conf/cvpr/HagerMR23,CHARMS:conf/icml/JiangYW00Z24,DBLP:journals/corr/abs-2510-08492,DBLP:conf/iclr/LeeY25a,DBLP:journals/corr/abs-2604-01579}. Although the main idea is to incorporate the tabular data as auxiliary information when training the image model, tabular-image transfer is hindered by critical  barriers: 1) Existing transfer methods overlook the interference caused by modality imbalance \cite{OGM:conf/cvpr/PengWD0H22,DBLP:conf/aaai/WangCZCZ26,DBLP:conf/aaai/GaoCGZ26} in multimodal learning, which limits the potential of unimodal models. 2) The canonical methods rely on cross-modal alignment to transfer knowledge. This strategy assigns equal importance to all modalities and is not specifically designed for the directed transfer of knowledge from one modality to another, which may lead the model to learn unstable intermediate states and hinder the target modality performance. As shown in Figure \ref{fig:intro} (a), the method CHARMS \cite{CHARMS:conf/icml/JiangYW00Z24} exhibits significant fluctuations in image loss, and its loss landscape visualization \cite{Land:conf/nips/Li0TSG18,Land:conf/iclr/ForetKMN21} is sharp minima, which is associated with suboptimal unimodel model quality. Therefore, a new cross-modal transfer method that mitigates modality imbalance and is tailored to directed transfer is urgently needed.

\begin{figure*}[t]
\centering
\begin{subfigure}{.49\linewidth}
\centering
\begin{subfigure}{.49\linewidth}
\centering
\includegraphics[width=\linewidth]{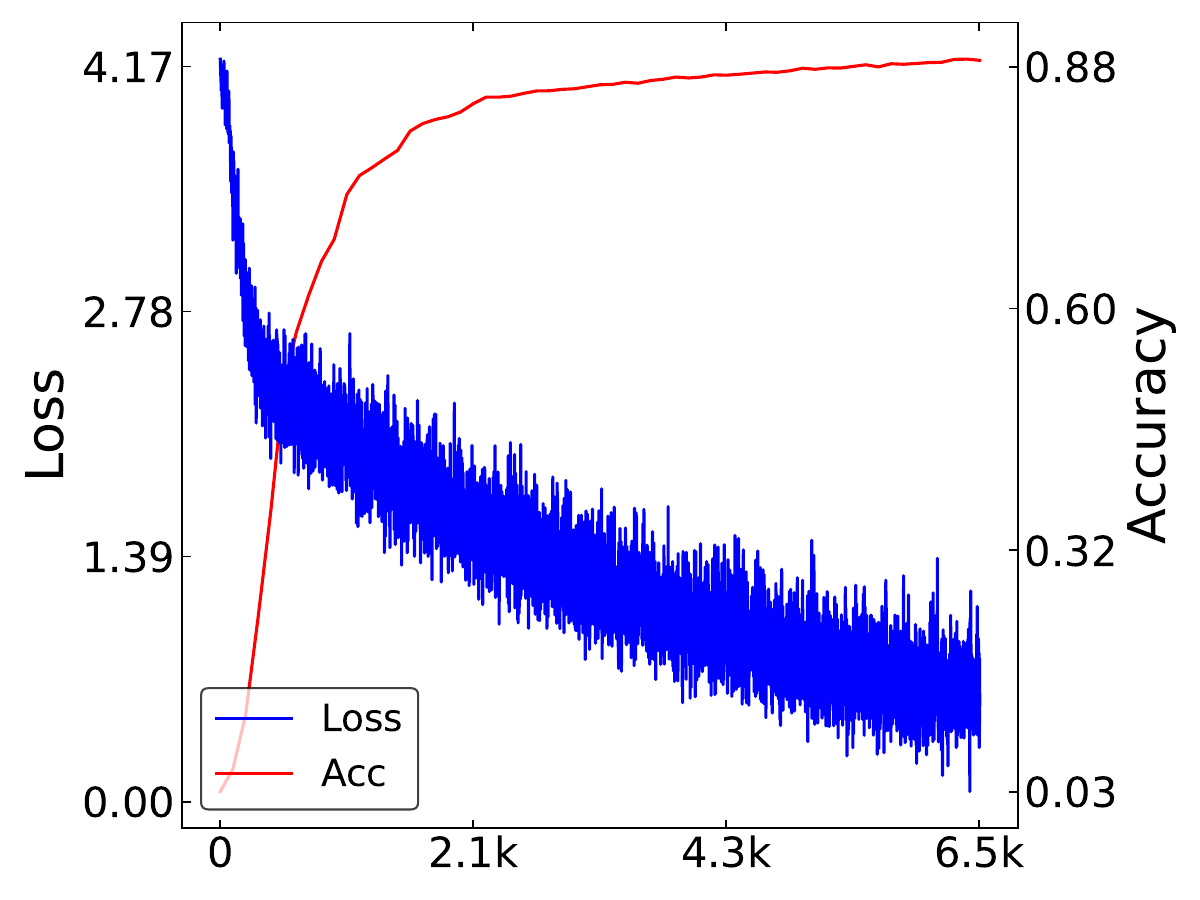}
\end{subfigure}
\hfill
\begin{subfigure}{.49\linewidth}
\centering
\includegraphics[width=\linewidth]{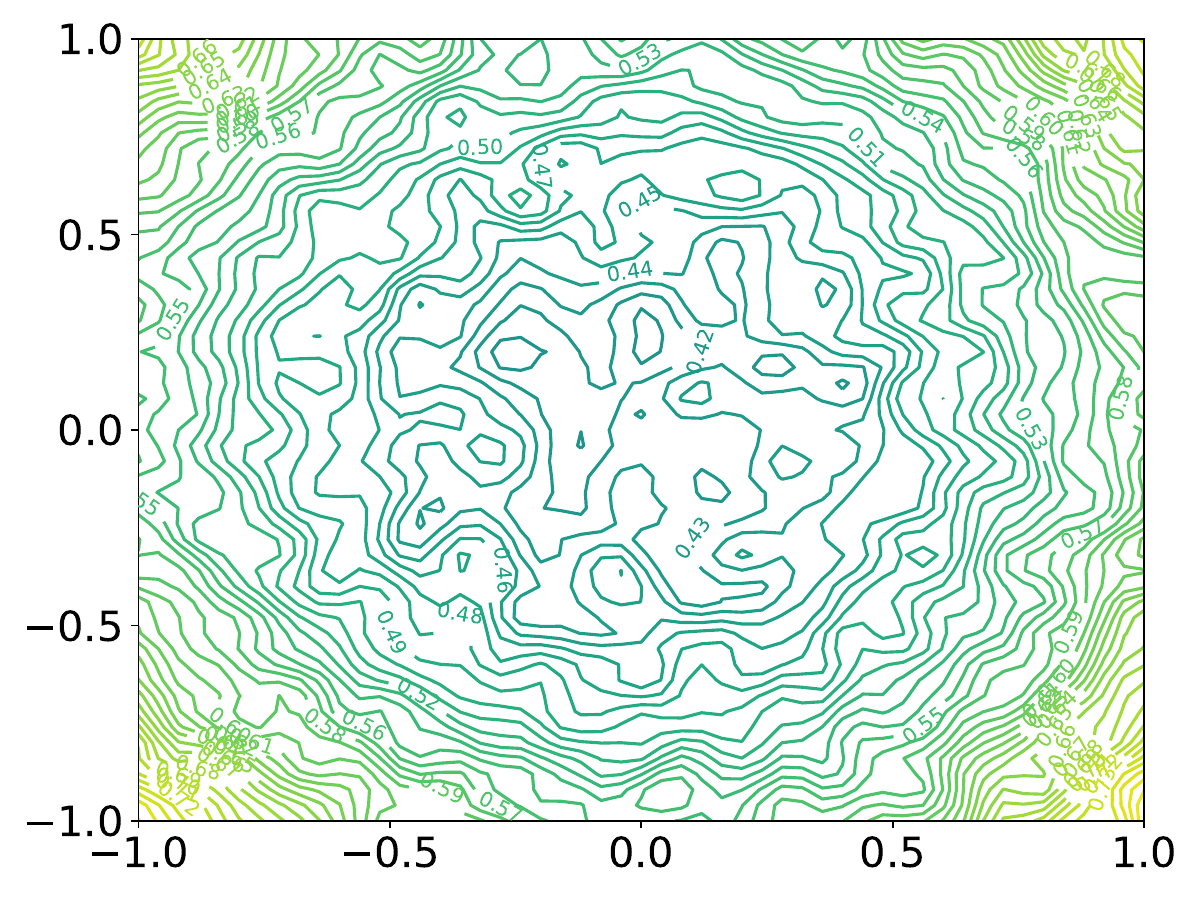}
\end{subfigure}
\caption{Loss, accuracy and loss landscape using CHARMS}
\label{fig:charms}
\end{subfigure}
\hfill
\begin{subfigure}{.49\linewidth}
\centering
\begin{subfigure}{.49\linewidth}
\centering
\includegraphics[width=\linewidth]{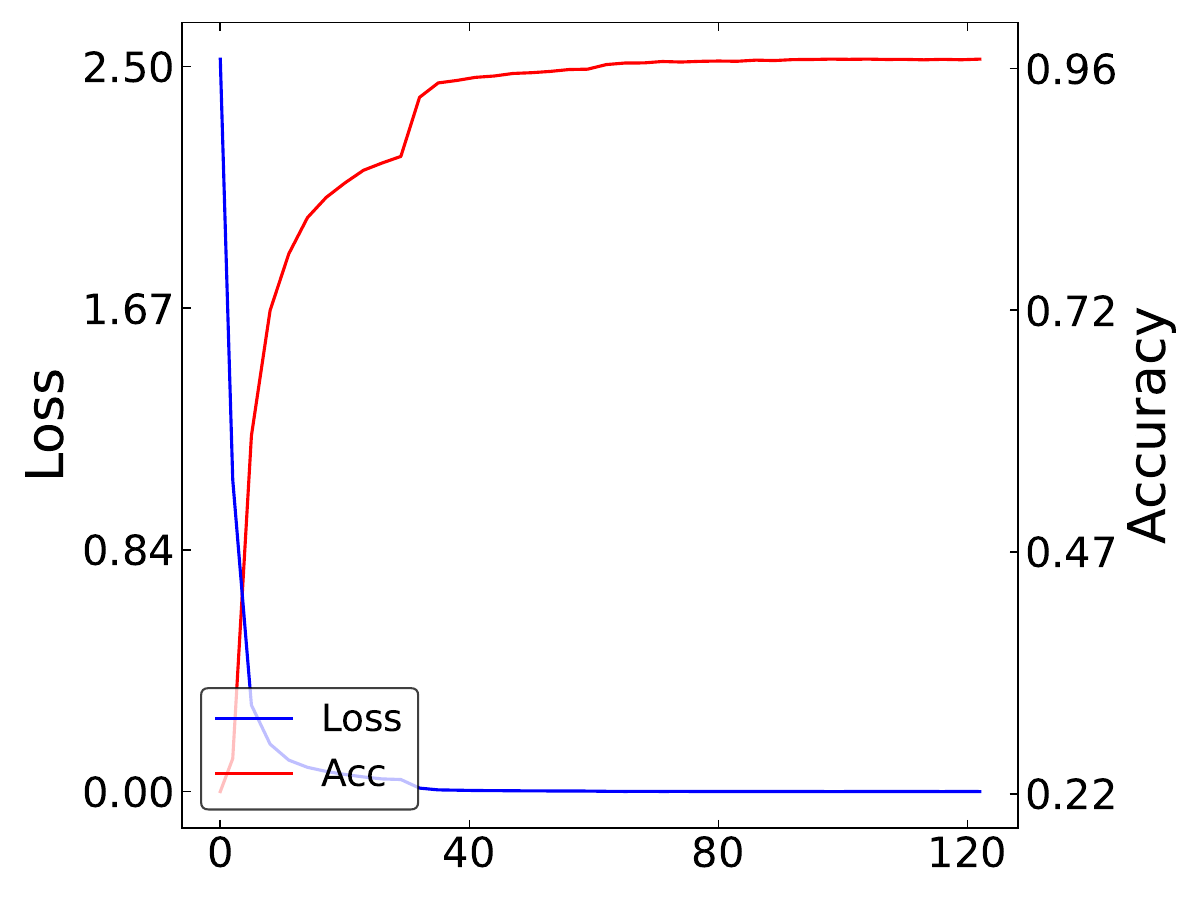}
\end{subfigure}
\hfill
\begin{subfigure}{.49\linewidth}
\centering
\includegraphics[width=\linewidth]{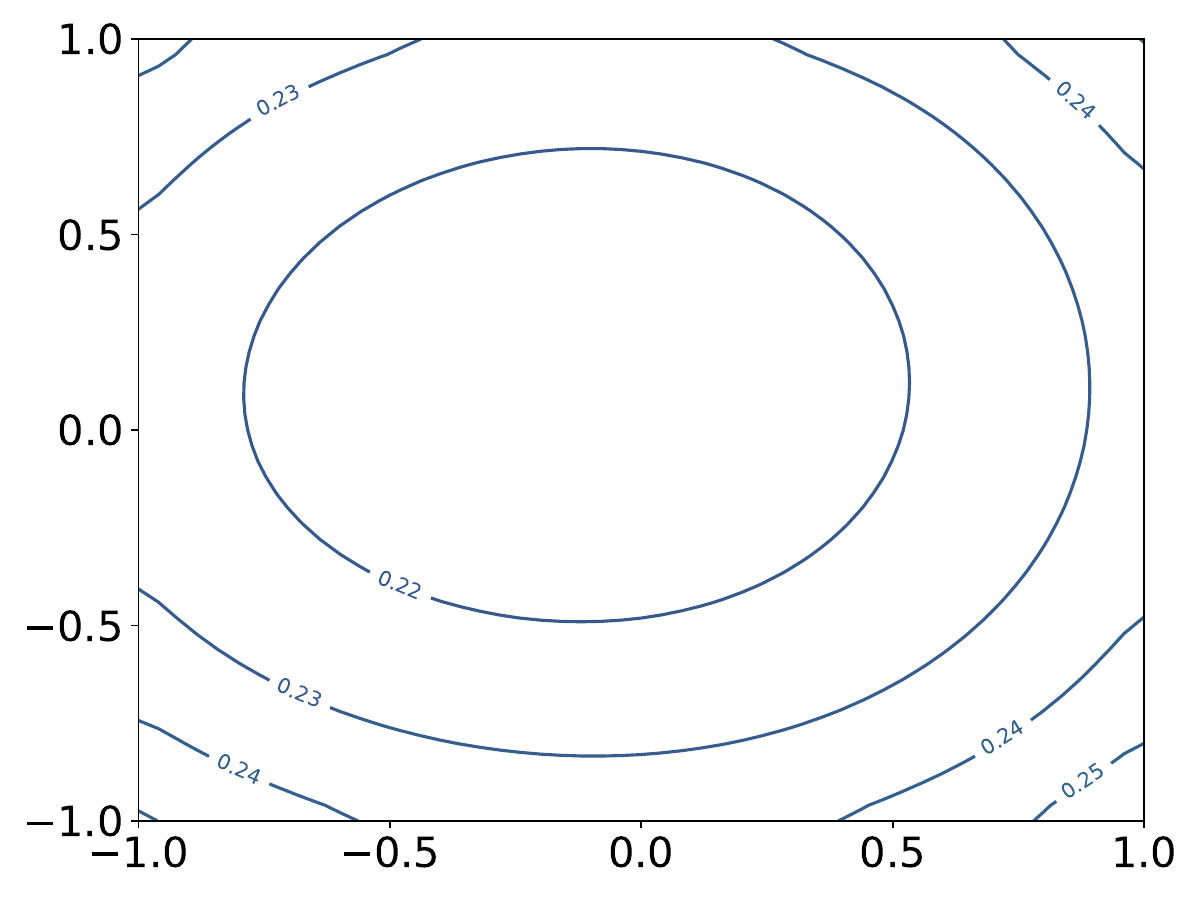}
\end{subfigure}
\caption{Loss, accuracy and loss landscape using SKT}
\label{fig:ours}
\end{subfigure}
\caption{Comparison between CHARMS and our approach SKT. All experiments are conducted on DVM dataset. (a) CHARMS exhibits oscillatory convergence in image task loss, and its loss landscape is noticeably sharp minima. (b) SKT achieves a stable loss decrease and improved performance with fewer iteration steps. Moreover, the unimodal model exhibits flat minima in the loss landscape, which is often associated with a high-quality model.}
\label{fig:intro}
\end{figure*}

To address these needs, we propose a Skewed Knowledge Transfer (SKT) approach, which extracts cross-modal gradient information in a shared parameter space. It establishes an asymmetric modality relationship where images play the dominant role and tables serve as auxiliary information. Our main idea is: expand the pie, then divide it. Therefore, SKT first alleviates modality imbalance and then adaptively uses tabular gradient knowledge to assist image learning.

Specifically, we first introduce a multimodal shared head as a bridge for cross-modal transfer. This enables the unimodal model to leverage cross-modal structure without introducing additional parameters, thereby effectively verifying that the gain indeed comes from the incorporation of cross-modal information. We then design a two-step Nash Bargaining strategy to adaptively leverage cross-modal gradient knowledge. Unlike the standard Nash Bargaining Solution, which treats players equally in symmetric settings, we extend it to the multimodal setting and further generalize the relation among players to an asymmetric case by two-step bargaining. In the first step, which aims to alleviate modality imbalance, SKT seeks a sharp multimodal combined gradient that benefits all players and yields higher overall gain. In the second step, which realizes directed cross-modal transfer, SKT further adjusts the image weight by a preference-aware adjustment, so that the final multimodal combined gradient is actually biased toward the image model. In this way, SKT transfers tabular knowledge to enhance the image unimodal model. In addition, we theoretically analyze multimodal loss convergence and Pareto improvement under the proposed SKT framework. In Figure \ref{fig:intro} (b), we present the image unimodal performance, loss, and loss landscape of SKT. We can find the image model in SKT converges stably and exhibits a flat minimum loss landscape, achieving higher accuracy and differing markedly from the equal alignment method (CHARMS). Our contributions are outlined as follows:
\begin{itemize}
\item We propose Skewed Knowledge Transfer (SKT), which considers modality imbalance and the asymmetric modality relationship in cross-modal transfer. SKT enhances the image model by adaptively integrating gradients, addressing the challenge of test-time tabular data missing.
\item We theoretically analyze the convergence of SKT and show that it achieves image-biased Pareto improvement.
\item Experiments reveal that SKT effectively leverages tabular knowledge as auxiliary information and consistently improves the performance of the image unimodal model.
\end{itemize}

\section{Related Work}
\subsection{Multimodal Learning}
Data in machine learning applications often comes in multiple modalities, including images, text, video, audio, tables, etc., providing diverse and rich sources of information. The goal of multimodal learning \cite{KMG:conf/aaai/FuSZY24,MML:conf/nips/Jiang2025aug,MMDL:conf/icml/NgiamKKNLN11,MMDL:conf/icml/JiaYXCPPLSLD21,MML:conf/aaai/FangZC26} is to integrate heterogeneous information from diverse sensors. Recent studies \cite{OGR-GB:conf/cvpr/WangTF20,MLA:conf/cvpr/ZhangYBY24} show that, due to modality heterogeneity, multimodal learning suffers from modality imbalance, which limits unimodal potential and overall performance. Many works attempt to address this issue by balancing the optimization of different modalities \cite{MMPareto:conf/icml/WeiH24,MML:journals/corr/abs-2602-13015}. Different from fusion, transfer trains on multimodal data but infers on a single modality, which has significant practical implications for applications.

\subsection{Transfer Knowledge from Table to Images}
An emerging multimodal learning example is tabular-image learning, which has received increasing attention\cite{TIL:conf/miccai/PolsterlWW21,TIL:journals/artmed/BorsosAH24,TIL:conf/nips/JiangXSLCLZZY25,TIL:conf/aaai/FuZZCJ26}. Most works \cite{DAFT:journals/neuroimage/WolfPWIa22,TIL:journals/corr/abs-2602-20223} follow the traditional fusion methods, without considering the test-time tabular data unavailable challenge. Given that tabular data are costly to acquire in real-world settings, access to all modalities is often infeasible. A practical solution is to leverage multimodal information to build a robust unimodal model. Unlike another line of methods \cite{MML:conf/kdd/CaiWGSJ18,MML:journals/pami/PanLXS22} that imputes missing modalities, this cross-modal transfer paradigm avoids the noise introduced by imputation and is better suited to computation-constrained scenarios \cite{DBLP:conf/kdd/WangZTZ20}. A representative transfer strategy is knowledge distillation (KD) \cite{KD:journals/corr/HintonVD15}, and other methods rely on cross-modal alignment learning. For example, MMCL \cite{MMCL:conf/cvpr/HagerMR23} adopts a self-supervised contrastive learning framework that leverages both image and tabular data to train unimodal encoders, followed by supervised fine-tuning on image data. CHARMS \cite{CHARMS:conf/icml/JiangYW00Z24} employs channel tabular alignment and optimal transport to selectively align tabular attributes with image channels during joint training. However, these methods overlook modality imbalance during training and treat all modalities equally, which is inconsistent with the goal of knowledge transfer that prioritizes a target modality. In this work, we address these issues by balancing gradients and establishing an asymmetric relationship.
 
\subsection{Nash Bargaining}
Nash Bargaining \cite{nash1953two} is one of the most representative cooperative bargaining game models, determining a Pareto-optimal \cite{hochman1969pareto} outcome by maximizing the product of participants' utilities to ensure fairness in negotiation. In recent years, the Nash Bargaining Solution has been widely applied to fairness problems in machine learning to resolve gradient conflicts in different scenarios \cite{Nash:conf/icml/NavonSAMKCF22,Nash:conf/nips/0005YCF0J024,Nash:conf/iccv/Wu_2025_ICCV,Nash:conf/aaai/XuYYFCD26}. Although Nash Bargaining has the potential to facilitate the coordination and transfer of modality gradient information, the relationship among modalities in cross-modal transfer are asymmetric, making the Nash Bargaining Solution not directly applicable to tabular-image transfer tasks. In this paper, we introduce Nash Bargaining into the multimodal setting and adopt a two-stage Nash Bargaining strategy to adapt it to the cross-modal transfer setting.

\section{Skewed Knowledge Transfer}
\subsection{Preliminaries}
Assume that we have $N$ data points, each of which has image and tabular modalities. Without loss of generality, we use $\DM=\{\x_i^I,\x_i^T, y_i\}_{i=1}^N$ to denote a training dataset, where $\x_i^I$ and $\x_i^T$ denote the $i$-th data point of image and tabular description, respectively. In addition, we assume there are $Y$ classes in total, and $y_i \in \left[ Y \right]=\left\{ 1,...,Y \right\}$. 

For the sake of simplicity, we use superscript $r$ to indicate the module corresponding to a specific modality in this section, where $r\in\{I, T\}$. With the rapid growth of deep learning, representative approaches \cite{MMCL:conf/cvpr/HagerMR23,TIP:conf/eccv/DuZWBOQ24,CHARMS:conf/icml/JiangYW00Z24,STiL:conf/cvpr/DuLOQ25} have adopted deep neural network (DNN) for tabular-image learning. Following these methods, we also utilize DNN to construct our models $f^r$. Specifically, we use $\phi^r(\cdot)$ to denote feature extractors and leverage tabular data through multimodal joint training. Then the features can be calculated by $\z^r=\phi^r(\x^r;\theta^r)$, where $\theta^r$ denotes the extractor parameters.

In scenarios where expert knowledge is unavailable due to high acquisition costs, we expect image model $f^I$ to produce still stable and accurate predictions given only image data $\x^I$. Therefore, in the testing phase, we evaluate the performance of $f^I$ by assessing its prediction accuracy on test image $\x^I_{i}$. Due to asymmetric deployment where only the image model is used at inference, our learning objective is inherently directional rather than symmetric.

\subsection{Multimodal Framework}
As model parameters contain learned knowledge, knowledge transfer can be conducted from parameter level. Introducing additional shared parameters will change the capacity of the model itself, making it difficult to accurately assess the actual gains brought by cross-modal information. Hence, we use a shared task head $\psi$ as a bridge for cross-modal interaction, constructing a shared parameter space across heterogeneous models without introducing extra parameters. Therefore, the prediction of given data can be calculated by $\psi(\cdot)$: $f^r(\x^r)=\psi(\z^r;\Theta)$, where $\Theta$ denotes the shared head parameters. The corresponding multimodal objective can be formed as:
\begin{equation}
\begin{aligned}
\LM \left ( \x^I, \x^T, \y \right )&=\LM^I \left ( \x^I, \y \right ) + \LM^T \left ( \x^T, \y \right ) \\
&=\sum_{i=1}^{N} \ell (f^I(\x^I_{i}),y_i)+\ell (f^T(\x^T_{i}),y_i),\label{obj:mml}
\end{aligned}
\end{equation}
where $\LM^I$ and $\LM^T$ denote the losses for image and tabular modalities, respectively. The $\ell$ denotes the cross-entropy loss for classification tasks or mean squared error (MSE) for regression tasks. 

Since both images and tabular data are projections of the same underlying reality, forcing them through shared weights can extract synergies by accumulating training gradients on the same parameters. We can use game theory to coordinate unimodal utilities and overall gain in multimodal learning.

\noindent\textbf{Multimodal Objective.} We bring game theory into the multimodal setting. In multimodal learning, we have two players, i.e., image and tabular, aiming to offer gradients to maximize the overall multimodal gain. We obtain unimodal gradients $\g_r$ for the shared head via unimodal loss $\LM^r$.

Let $\G=\left [\g_I, \g_T \right ]$. Our objective is to construct a combined gradient $\boldsymbol{\tilde{g}}=\G\boldsymbol{\alpha}$ using non-negative weights $\boldsymbol{\alpha}=\left [{\alpha}_I, {\alpha}_T \right ]^{\top}$ to minimize the multimodal loss. Inspired by \cite{Nash:conf/icml/NavonSAMKCF22,Nash:conf/nips/0005YCF0J024,Nash:conf/iccv/Wu_2025_ICCV}, we define the utility function $u^r(\boldsymbol{\tilde{g}})$ as
\begin{align}
u^r(\boldsymbol{\tilde{g}}) =\g_{r}^{\top} \boldsymbol{\tilde{g}}=(\G^{\top}\G\boldsymbol{\alpha})_r.\label{obj:uf}
\end{align}
The intuition is that the utility tells us how much of the proposed update is applied in modality $r$. If $\g_r$ and $\boldsymbol{\tilde{g}}$ align well, updating the shared head along $\boldsymbol{\tilde{g}}$ leads to a larger decrease in $\LM^r$, and more effectively minimizing multimodal loss in Equation \ref{obj:mml}. Our main idea is to first make the pie bigger and then divide it. Therefore, our multimodal objective is to determine a combined gradient $\boldsymbol{\tilde{g}}$ that improves loss in Equation \ref{obj:mml}, thereby achieving higher overall gain. Denote $\B_{\epsilon}$ the ball of radius $\epsilon$ centered at 0, the objective can be formalized as:
\begin{align}
\max_{\boldsymbol{\tilde{g}} \in \B_{\epsilon}} \sum_{r\in\{I,T\}}\log({u^r(\boldsymbol{\tilde{g}})}),\label{obj:object}
\end{align}
where the logarithm is adopted to help balance and align with the property that utility gains less benefit as it continues to improve. With this objective, we can mitigate modality imbalance and exploit the potential of each modality to maximize multimodal gain. To achieve this objective while favoring the image model, we design a two-step Nash Bargaining strategy to regulate the weights $\boldsymbol{\alpha}$.

\subsection{Two-step Nash Bargaining}
\noindent\textbf{Balancing Multimodal Learning.} The first step aims to address modality imbalance in joint learning, thereby unleashing unimodal potential and improving overall gain, i.e., expanding the pie. Concretely, we use the Nash Bargaining Solution to derive a set of Pareto-optimal weights for balancing modality-specific gradients. We present the Nash Bargaining Solution to Equation \ref{obj:object} by following two theorems. We provide proofs in the appendix.

\begin{theorem}[Optimality condition]\label{thm:optimality}
Let some scalar $\mu>0$ and $L(\boldsymbol{\tilde{g}})=\sum_{r\in\{I,T\}}\log({u^r(\boldsymbol{\tilde{g}})})$. The optimal solution $\boldsymbol{\tilde{g}}^{*}$ to Equation \ref{obj:object} must satisfy:
\begin{align}
\nabla L(\boldsymbol{\tilde{g}}^{*})=\mu \boldsymbol{\tilde{g}}^{*}, \boldsymbol{\tilde{g}}^{*}=\G\boldsymbol{\alpha}^{*}. \label{thm:opt}
\end{align} 
\end{theorem}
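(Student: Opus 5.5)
The plan follows the standard Nash-MTL argument (\citealp{Nash:conf/icml/NavonSAMKCF22}): we treat Equation \ref{obj:object} as a constrained concave maximization and read off the KKT conditions. Work on the feasible region where both utilities are positive, $\mathcal{U}=\{\boldsymbol{\tilde{g}}\in\B_{\epsilon}: \g_r^{\top}\boldsymbol{\tilde{g}}>0,\ r\in\{I,T\}\}$, and assume it is nonempty (e.g.\ $\g_I,\g_T$ linearly independent, or not in direct conflict). On $\mathcal{U}$, each $\log(\g_r^{\top}\boldsymbol{\tilde{g}})$ is concave, being the log of a linear function. Hence $L$ is concave, and since $\B_{\epsilon}$ is convex, any local maximizer is global.

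\textbf{Step 1 (the constraint is active).} Show that the optimum lies on the sphere $\|\boldsymbol{\tilde{g}}^*\|=\epsilon$. The utilities are linear, so $L(t\boldsymbol{\tilde{g}})=L(\boldsymbol{\tilde{g}})+2\log t$, which is strictly increasing in $t$. Any interior point can therefore be scaled up to improve $L$, so the ball constraint $\|\boldsymbol{\tilde{g}}\|^2\le\epsilon^2$ binds.

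\textbf{Step 2 (KKT).} Write the Lagrangian $L(\boldsymbol{\tilde{g}})-\frac{\mu}{2}(\|\boldsymbol{\tilde{g}}\|^2-\epsilon^2)$. Slater's condition holds because there is a strictly feasible point with positive utilities, so stationarity yields $\nabla L(\boldsymbol{\tilde{g}}^*)=\mu\boldsymbol{\tilde{g}}^*$ with $\mu\ge0$. To get $\mu>0$, use $\nabla L(\boldsymbol{\tilde{g}})=\sum_r \g_r/(\g_r^{\top}\boldsymbol{\tilde{g}})$. Taking the inner product with $\boldsymbol{\tilde{g}}^*$ gives $\nabla L(\boldsymbol{\tilde{g}}^*)^{\top}\boldsymbol{\tilde{g}}^*=2=\mu\epsilon^2$, so $\mu=2/\epsilon^2>0$.

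\textbf{Step 3 (representation in the span of the gradients).} From stationarity,
\begin{equation*}
\boldsymbol{\tilde{g}}^*=\frac{1}{\mu}\sum_r \frac{\g_r}{\g_r^{\top}\boldsymbol{\tilde{g}}^*}=\G\boldsymbol{\alpha}^*,
\qquad
\alpha_r^*=\frac{1}{\mu\,\g_r^{\top}\boldsymbol{\tilde{g}}^*}>0 .
\end{equation*}
This gives the second claim with nonnegative (indeed positive) weights. As a byproduct, it also yields the familiar characterization $\G^{\top}\G\boldsymbol{\alpha}^*=\frac{1}{\mu}\boldsymbol{1}/\boldsymbol{\alpha}^*$, which will be used to compute the weights.

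\textbf{Main obstacle.} The delicate point is well-posedness rather than the calculus. The objective is only defined where both utilities are positive, and the maximum might not be attained if the gradients directly conflict ($\g_I=-c\,\g_T$), in which case $\mathcal{U}$ is empty. I would state the assumption that $\mathcal{U}$ is nonempty explicitly. Under it, $L\to-\infty$ at the boundary of $\mathcal{U}$ while the sphere is compact, so a maximizer exists in the relative interior of the positive-utility cone. That is exactly what justifies applying the KKT conditions and dividing by the utilities in Step 3.
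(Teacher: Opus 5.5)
Your proposal is correct and follows essentially the same route as the paper: show the ball constraint binds via the scaling argument, apply KKT stationarity, and read off $\boldsymbol{\tilde{g}}^{*}=\G\boldsymbol{\alpha}^{*}$ with $\alpha_r^{*}\propto 1/u^r(\boldsymbol{\tilde{g}}^{*})>0$. Your version is somewhat tighter in three respects. Taking the inner product with $\boldsymbol{\tilde{g}}^{*}$ gives $\mu=2/\epsilon^2>0$, which settles the sign that the paper simply absorbs into the scalar. Working on the open positive-utility region replaces the paper's multipliers $\rho_r$ for the strict constraints $u^r>0$. Your nonemptiness and existence remark supplies a well-posedness step the paper omits.
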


\begin{theorem}[Solution characterization]\label{thm:solution}
The solution to Equation \ref{thm:opt}, up to scaling, is $\boldsymbol{\tilde{g}}^{*}=\G\boldsymbol{\alpha}^{*}$ where $\boldsymbol{\alpha}^{*}$ is the solution to
\begin{align}
\G^{\top}\G\boldsymbol{\alpha}=\frac{1}{\boldsymbol{\alpha}}, \label{thm:obj}
\end{align}
with the element-wise reciprocal $\tfrac{1}{\boldsymbol{\alpha}}$.
\end{theorem}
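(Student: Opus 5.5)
We start from the optimality condition of Theorem~\ref{thm:optimality}, $\nabla L(\boldsymbol{\tilde{g}}^{*})=\mu\boldsymbol{\tilde{g}}^{*}$, and compute the gradient of $L$ explicitly. Since $u^r(\boldsymbol{\tilde{g}})=\g_r^{\top}\boldsymbol{\tilde{g}}$, the chain rule gives
\begin{align*}
\nabla L(\boldsymbol{\tilde{g}})=\sum_{r\in\{I,T\}}\frac{\g_r}{\g_r^{\top}\boldsymbol{\tilde{g}}}.
\end{align*}
This expression is only meaningful when each $u^r(\boldsymbol{\tilde{g}}^{*})>0$. That is automatic at an optimum, because the objective is $-\infty$ whenever some utility is nonpositive. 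The optimality condition therefore reads
\begin{align*}
\sum_{r}\frac{1}{\g_r^{\top}\boldsymbol{\tilde{g}}^{*}}\,\g_r=\mu\,\boldsymbol{\tilde{g}}^{*}.
\end{align*}
The right-hand side is $\mu\G\boldsymbol{\alpha}^{*}$ and the left-hand side is $\G\boldsymbol{\beta}$, where $\beta_r=1/(\g_r^{\top}\G\boldsymbol{\alpha}^{*})=1/(\G^{\top}\G\boldsymbol{\alpha}^{*})_r$.

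The key step is to match coefficients. The identity $\G\boldsymbol{\beta}=\G(\mu\boldsymbol{\alpha}^{*})$ gives $\boldsymbol{\beta}=\mu\boldsymbol{\alpha}^{*}$ provided $\g_I,\g_T$ are linearly independent. I will state this as the working assumption, as is standard in Nash-MTL-type analyses. Under it, $\mu\alpha^{*}_r=1/(\G^{\top}\G\boldsymbol{\alpha}^{*})_r$ for each $r$, which rearranges to
\begin{align*}
(\G^{\top}\G\boldsymbol{\alpha}^{*})_r=\frac{1}{\mu\,\alpha^{*}_r}.
\end{align*}
The $\alpha^{*}_r$ are automatically positive, because $\beta_r>0$ and $\mu>0$.

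To absorb $\mu$, rescale by setting $\boldsymbol{\alpha}'=\sqrt{\mu}\,\boldsymbol{\alpha}^{*}$. Then $(\G^{\top}\G\boldsymbol{\alpha}')_r=\sqrt{\mu}\,(\G^{\top}\G\boldsymbol{\alpha}^{*})_r=1/(\sqrt{\mu}\,\alpha^{*}_r)=1/\alpha'_r$, so $\boldsymbol{\alpha}'$ solves $\G^{\top}\G\boldsymbol{\alpha}=1/\boldsymbol{\alpha}$. Because $\G\boldsymbol{\alpha}'$ is a positive multiple of $\boldsymbol{\tilde{g}}^{*}$, this is exactly the claimed characterization up to scaling.

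The converse direction is a direct check. Any positive solution of $\G^{\top}\G\boldsymbol{\alpha}=1/\boldsymbol{\alpha}$ gives $\nabla L(\G\boldsymbol{\alpha})=\G\boldsymbol{\alpha}$, so the condition of Theorem~\ref{thm:optimality} holds with $\mu=1$. After scaling to the boundary of $\B_\epsilon$, this solution is the optimum, since $L$ is concave and the feasible set is convex.

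The main obstacle is the coefficient-matching step, which requires $\G$ to have full column rank. If $\g_I$ and $\g_T$ are parallel, the representation of $\boldsymbol{\tilde{g}}$ as $\G\boldsymbol{\alpha}$ is not unique. My first attempt at that case is to note that one can still choose $\boldsymbol{\alpha}$ with $\boldsymbol{\beta}=\mu\boldsymbol{\alpha}$, since $\boldsymbol{\beta}/\mu$ is itself a valid representation of $\boldsymbol{\tilde{g}}^{*}$. This choice again yields the same equation, so the characterization survives; only uniqueness of $\boldsymbol{\alpha}$ is lost.
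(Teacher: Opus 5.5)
Your argument is correct and is essentially the paper's: both identify the weights with the (scaled) reciprocal utilities $1/u^r(\boldsymbol{\tilde{g}}^{*})$ and then apply $\g_r^{\top}$ to $\boldsymbol{\tilde{g}}^{*}=\G\boldsymbol{\alpha}$ to obtain $(\G^{\top}\G\boldsymbol{\alpha})_r=1/\alpha_r$. The paper simply \emph{defines} $\boldsymbol{\alpha}$ this way from the start, which is your last-paragraph choice $\boldsymbol{\alpha}=\boldsymbol{\beta}/\mu$, so the full-rank assumption and the coefficient matching you flag as the main obstacle are never needed; meanwhile your explicit $\sqrt{\mu}$ rescaling and your converse check via concavity make precise what the paper only calls ``normalizing the scalar factor.''
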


Hence, we can solve $\boldsymbol{\alpha}$ in Equation \ref{thm:obj} to obtain the solution to Equation \ref{obj:object}. Following \cite{Nash:conf/icml/NavonSAMKCF22}, we adopt a general approximation $\boldsymbol{\alpha}$ framework that iteratively improves the solution for $\boldsymbol{\alpha}$ via a first-order approximation. Denote $\S=\G^{\top}\G$, then our optimization problem is formulated as:
\begin{equation}
\begin{aligned}
\boldsymbol{\alpha}^{(t)} =&\arg\min_{\boldsymbol{\alpha}} \sum_{r\in \left \{ I,T \right \} } (\S\boldsymbol{\alpha})_r+\gamma (\boldsymbol{\alpha};\boldsymbol{\alpha}^{(t-1)})\\
s.t.\;& -\log(\alpha_r)-\log((\S\boldsymbol{\alpha})_r)\le 0, r\in \left \{ I,T \right \} \\
&\boldsymbol{\alpha} > 0,\label{opt:balance}
\end{aligned}
\end{equation}
where $\gamma (\boldsymbol{\alpha};\boldsymbol{\alpha}^{(t-1)})$ is a first-order linear regularization term. The formulation is as follows:
\begin{align}
\gamma(\boldsymbol{\alpha})=\sum_{r\in \left \{ I,T \right \} }\log(\alpha_r)+\sum_{r\in \left \{ I,T \right \} }\log((\S\boldsymbol{\alpha})_r), \\
\gamma (\boldsymbol{\alpha};\boldsymbol{\alpha}^{(t-1)})\approx \nabla \gamma(\boldsymbol{\alpha}^{t-1})^{\top}(\boldsymbol{\alpha}-\boldsymbol{\alpha}^{(t-1)}).
\end{align}
This regularization simplifies computation while preventing $\alpha$ from oscillating or collapsing to zero by capturing historical information. 

In the first step, we solve a problem to derive Pareto-optimal weights $\boldsymbol{\alpha}^{(t)}$ for balancing multimodal learning. We then design a second step Nash bargaining strategy tailored to the asymmetric learning objective in cross-modal transfer.

\noindent\textbf{Asymmetric Cross-modal Transfer.} The goal of cross-modal transfer is to transfer knowledge from one modality to another, naturally giving rise to an asymmetric modality relationship. Therefore, the second step introduces modality preference value and adaptively adjusts the target modality weight, i.e., allocating more of the pie to the target modality. Specifically, we keep the tabular weight solution $\alpha_{T}^{(t)}$ fixed and optimize only the image weight $\alpha_I$ based on Equation \ref{opt:balance} solution. We can modify Equation \ref{opt:balance} to ensure the priority of the images by relaxing the constraint. Therefore, the optimization problem can be formulated as:
\begin{equation}
\begin{aligned}
\boldsymbol{\alpha}^{(t+1)} &=\arg\min_{\alpha_I} (\S\boldsymbol{\alpha})_I+\gamma (\boldsymbol{\alpha};\boldsymbol{\alpha}^{(t)})\\
s.t.\;& -\log(\alpha_r)-\log((\S\boldsymbol{\alpha})_r)+\log(p)\le 0, r\in \left \{ I,T \right \} \\
&\alpha_I > 0,\label{opt:transfer}
\end{aligned}
\end{equation}
where $p$ is a constant that serves as a modality preference value to adjust the transfer direction. When $p>1$, the lower bound of the image weight is increased, allowing the combined gradient to favor the image unimodal model while maintaining overall Pareto optimality.

Since the tabular gradient weight $\alpha_T$ is fixed, the linear regularization term $\gamma (\boldsymbol{\alpha};\boldsymbol{\alpha}^{(t)})$ applies only to $\alpha_I$. Considering only the optimizable variable, with the term corresponding to $\alpha_T$ set to 0, the formulation can be written as:
\begin{align}
\gamma (\boldsymbol{\alpha};\boldsymbol{\alpha}^{(t)})\approx \nabla_{\alpha_I} \gamma(\boldsymbol{\alpha}^{(t)})^{\top}(\alpha_I-\alpha^{(t)}_I).
\end{align}
It serves to prevent $\alpha_I$ from oscillating or converging to an extreme value. 

In the second step, we enable directed transfer of tabular knowledge to the image by introducing modality preference and solving only for the image weight. Meanwhile, the fixed tabular weight, together with the Pareto-optimal constraint, prevents the tabular gradient from being overly suppressed.

Our algorithm is summarized in Algorithm \ref{algo:skt}. To sum up, SKT employs a multimodal shared head for cross-modal transfer. Subsequently, SKT addresses modality imbalance and the asymmetric modality relationship in cross-modal transfer via a two-step Nash Bargaining strategy.

\begin{algorithm}[t]
\caption{Learning algorithm of our proposed method.}\label{algo:skt}
\begin{algorithmic}[1]
\STATE {\bfseries Input:}Training dataset $\DM=\{\x_i^I,\x_i^T, y_i\}_{i=1}^N$.
\STATE {\bfseries Output:}Parameters$\{\theta^I,\Theta\}$ for the image model.\\
\textbf{INIT} Initialize modality weight $\boldsymbol{\alpha}=\left [{\alpha}_I, {\alpha}_T \right ]^{\top}$. Initialize iteration $k=1$. Initialize encoder parameters $\{\theta^I,\theta^T\}$ and shared parameters $\Theta$.
\FOR{$k=1\mapsto\#iterations$}
\STATE Sample a mini-batch $\BM$ from $\DM$;
\STATE Feed forward the batched data $\BM$ to the model;
\STATE Calculate multimodal loss $\LM \left ( \x^I, \x^T, \y \right )$;
\STATE Calculate gradient using back-propagation;
\STATE \textbf{// Skewed Knowledge Transfer}
\STATE Obtain $\g_I$ and $\g_T$ for shared head;
\STATE Set $\G=\left [ \g_I, \g_T \right ]$ and $\S=\G^{\top}\G$;
\STATE Solve Equation \ref{opt:balance} to obtain $\boldsymbol{\alpha}$;
\STATE \textbf{// Balancing Multimodal Learning}
\STATE Fix $\alpha_T$ and solve Equation \ref{opt:transfer} to obtain $\boldsymbol{\alpha}^{(k)}$;
\STATE \textbf{// Asymmetric Cross-modal Transfer}
\STATE Calculate combined gradient: $\boldsymbol{\tilde{g}} = \G\boldsymbol{\alpha}^{(k)}$;
\STATE Update $\Theta$ with combined gradient $\boldsymbol{\tilde{g}}$;
\STATE Update $\{\theta^I,\theta^T\}$ with gradients $\{\nabla_{\theta^I}\LM^I,\nabla_{\theta^T}\LM^T\}$;
\ENDFOR
\STATE \textbf{Return} $\{\theta^I,\Theta\}$.
\end{algorithmic}
\end{algorithm}

\subsection{Theoretical Properties}
SKT transfers tabular knowledge to the image model in a shared parameter space by a two-step Nash bargaining strategy. In this section, we analyze the key theoretical properties of this strategy.

\begin{lemma}[Boundedness]\label{lemma:Boundedness}
If $\g_r$ is $\sigma$-bounded for $r \in \{ I, T\}$, then $\frac{1}{\sqrt{2}\sigma} \leq \left \| \alpha_T \right \|$ and $\frac{\sqrt{p}}{\sqrt{2}\sigma} \leq \left \| \alpha_I \right \|$, where $\sigma < \infty$ and preference $p>1$.
\end{lemma}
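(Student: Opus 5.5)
We work directly from the constraints that the weights satisfy after the two bargaining steps. The tabular bound comes from the constraint of Equation \ref{opt:balance}, and the image bound from the relaxed constraint of Equation \ref{opt:transfer}. In each case we bound $(\S\boldsymbol{\alpha})_r$ from above using $\sigma$-boundedness ($\|\g_r\|\le\sigma$), which turns the multiplicative constraint into a quadratic inequality in the weights.

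\textbf{Step 1: the constraints in product form.} Exponentiating the constraint $-\log(\alpha_r)-\log((\S\boldsymbol{\alpha})_r)+\log(p)\le 0$ gives $\alpha_r(\S\boldsymbol{\alpha})_r\ge p$ for $r\in\{I,T\}$. This is the relaxed counterpart of the solution characterization of Theorem \ref{thm:solution}, where $\alpha_r(\S\boldsymbol{\alpha})_r=1$.
\begin{itemize}
\item For $\alpha_T$, which is fixed after the first step, the operative constraint is that of Equation \ref{opt:balance}, i.e.\ $\alpha_T(\S\boldsymbol{\alpha})_T\ge 1$.
\item For $\alpha_I$, which is re-optimized in the second step, we have $\alpha_I(\S\boldsymbol{\alpha})_I\ge p$.
\end{itemize}

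\textbf{Step 2: upper bound on $(\S\boldsymbol{\alpha})_r$.} Write $(\S\boldsymbol{\alpha})_r=\g_r^\top(\alpha_I\g_I+\alpha_T\g_T)$. By Cauchy--Schwarz and $\|\g_r\|\le\sigma$,
\[
(\S\boldsymbol{\alpha})_r\le\sigma^2(\alpha_I+\alpha_T)\le\sqrt{2}\,\sigma^2\|\boldsymbol{\alpha}\|.
\]
The factor $\sqrt{2}$ comes from $\alpha_I+\alpha_T\le\sqrt2\|\boldsymbol{\alpha}\|$ for two nonnegative entries.

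\textbf{Step 3: combine.} For the tabular weight, Step 1 and Step 2 give $1\le\alpha_T\,\sqrt2\,\sigma^2\|\boldsymbol{\alpha}\|$. This mixes $\alpha_T$ with $\|\boldsymbol{\alpha}\|$, which is the difficulty addressed below. If it can be reduced to $1\le 2\sigma^2\alpha_T^2$, we get $\alpha_T\ge 1/(\sqrt2\sigma)$. The same argument with $p$ in place of $1$ gives $\alpha_I\ge\sqrt p/(\sqrt2\sigma)$.

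\textbf{Main obstacle: decoupling $\alpha_r$ from $\|\boldsymbol{\alpha}\|$.} The constraint controls the product of $\alpha_r$ with the full $\|\boldsymbol{\alpha}\|$, not with $\alpha_r$ alone, so a large companion weight could in principle allow a small $\alpha_r$.
\begin{itemize}
\item \emph{First attempt.} Sum the constraints over $r$ to get $\boldsymbol{\alpha}^\top\S\boldsymbol{\alpha}\ge 1+p$. Then use $\boldsymbol{\alpha}^\top\S\boldsymbol{\alpha}=\|\G\boldsymbol{\alpha}\|^2\le 2\sigma^2\|\boldsymbol{\alpha}\|^2$. This yields a bound on the norm $\|\boldsymbol{\alpha}\|$, which appears to be what the statement's $\|\alpha_r\|$ notation intends.
\item \emph{Per-coordinate fallback.} Argue that at the bargaining solution the constraints are tight and the two modalities' terms are balanced, $\alpha_I(\S\boldsymbol{\alpha})_I\approx p\,\alpha_T(\S\boldsymbol{\alpha})_T$ up to the preference factor. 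Combined with $\alpha_r(\S\boldsymbol{\alpha})_r\le\alpha_r\sigma\|\G\boldsymbol{\alpha}\|$, this gives the per-coordinate bounds.
\end{itemize}
I expect this decoupling step to require the most care, and possibly an implicit assumption, such as the weights being comparable or the bound being read for $\|\boldsymbol{\alpha}\|$.
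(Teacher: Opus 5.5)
There is a genuine gap: your proposal stops at the decoupling step, and neither of your two routes proves the per-coordinate bounds. The missing idea, which is how the paper argues, is to use the \emph{equality} characterization of Theorem~\ref{thm:solution}, $\S\boldsymbol{\alpha}=1/\boldsymbol{\alpha}$, holding in \emph{both} coordinates of the same system, instead of the inequality constraints. Equality in both coordinates fixes the norm of the combined gradient independently of the weights: $\|\G\boldsymbol{\alpha}\|^2=\boldsymbol{\alpha}^\top\S\boldsymbol{\alpha}=\sum_r \alpha_r\cdot\tfrac{1}{\alpha_r}=2$. Then Cauchy--Schwarz is applied against $\|\G\boldsymbol{\alpha}\|$, not against the weights: $1/\alpha_T=(\S\boldsymbol{\alpha})_T=\g_T^\top\G\boldsymbol{\alpha}\le\|\g_T\|\,\|\G\boldsymbol{\alpha}\|\le\sqrt{2}\,\sigma$. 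No companion weight appears. For the image weight the paper does the same with $\S\boldsymbol{\alpha}=p/\boldsymbol{\alpha}$ in both coordinates, which gives $\|\G\boldsymbol{\alpha}\|^2=2p$ and $p/\alpha_I\le\sqrt{2p}\,\sigma$. Your Step~2 bound $(\S\boldsymbol{\alpha})_r\le\sigma^2(\alpha_I+\alpha_T)$ is correct but is a dead end, because it reintroduces the other weight. Your own worry is justified: with only the inequalities, a large $\alpha_I$ and $\g_I^\top\g_T>0$ allow an arbitrarily small $\alpha_T$.

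Neither fallback closes this. The first changes the statement. Here $\|\alpha_T\|$ is the absolute value of a scalar weight, and a lower bound on $\|\boldsymbol{\alpha}\|$ implies nothing about each coordinate.

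The second is closest, since it uses $\alpha_r(\S\boldsymbol{\alpha})_r\le\alpha_r\sigma\|\G\boldsymbol{\alpha}\|$, but it mixes the step-one level ($1$ for $T$) with the step-two level ($p$ for $I$) in one system. If both are tight, then $\|\G\boldsymbol{\alpha}\|^2=1+p$. This gives $\alpha_I\ge p/(\sigma\sqrt{1+p})$, which suffices for the image claim because $p>1$. It gives only $\alpha_T\ge 1/(\sigma\sqrt{1+p})$, strictly weaker than the claimed $1/(\sqrt{2}\sigma)$. To get the tabular bound, apply the argument to the step-one solution $\boldsymbol{\alpha}^{(t)}$, where both coordinates sit at level $1$, and then use that $\alpha_T$ is frozen in step two.

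So the ``implicit assumption'' you anticipated is exactly tightness of the bargaining conditions in both coordinates of a single system. The paper takes this from Theorem~\ref{thm:solution} for step one and simply posits it, with $p$ in place of $1$, for step two.
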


\begin{theorem}[Pareto improvement]\label{thm:Pareto}
Assume $\LM^r$ is differential and Lipschitz-smooth with constant $C>0$. If the learning rate at step $k$ is set to $\eta^{\left ( k\right )}\leq\frac{1}{C\alpha_{r}^{\left ( k\right )}}$, then the update ensures $\LM^r\left ( \Theta^{\left ( k+1\right )}\right ) \leq \LM^r \left ( \Theta^{\left ( k\right )}\right )$ for both players.
\end{theorem}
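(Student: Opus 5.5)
The argument follows the descent lemma, in the style of Nash-MTL. The update on the shared head is $\Theta^{(k+1)}=\Theta^{(k)}-\eta^{(k)}\boldsymbol{\tilde{g}}^{(k)}$ with $\boldsymbol{\tilde{g}}^{(k)}=\G\boldsymbol{\alpha}^{(k)}$. Since $\LM^r$ is $C$-Lipschitz-smooth, for each player $r\in\{I,T\}$ we have
\begin{align*}
\LM^r(\Theta^{(k+1)})\le \LM^r(\Theta^{(k)})-\eta^{(k)}\g_r^{\top}\boldsymbol{\tilde{g}}^{(k)}+\frac{C}{2}(\eta^{(k)})^2\|\boldsymbol{\tilde{g}}^{(k)}\|^2 .
\end{align*}
So it suffices to show $\eta^{(k)}\g_r^{\top}\boldsymbol{\tilde{g}}^{(k)}\ge \frac{C}{2}(\eta^{(k)})^2\|\boldsymbol{\tilde{g}}^{(k)}\|^2$, that is, $\|\boldsymbol{\tilde{g}}^{(k)}\|^2\le \frac{2}{C\eta^{(k)}}\,u^r(\boldsymbol{\tilde{g}}^{(k)})$.

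The first step expresses the utilities through the bargaining conditions. By Theorem \ref{thm:solution}, the first-step weights satisfy $(\S\boldsymbol{\alpha})_r=1/\alpha_r$, so $u^r=\g_r^{\top}\boldsymbol{\tilde{g}}=1/\alpha_r$. For the second step, the constraints in Equation \ref{opt:transfer} read $\alpha_r(\S\boldsymbol{\alpha})_r\ge p$. At least for the active constraint this gives $u^r\ge p/\alpha_r\ge 1/\alpha_r$ (using $p>1$), and I would argue that $u^r(\boldsymbol{\tilde{g}}^{(k)})\ge 1/\alpha_r^{(k)}$ holds for both players. In particular $u^r>0$, so $\boldsymbol{\tilde{g}}^{(k)}$ is a descent direction for each player.

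The second step bounds $\|\boldsymbol{\tilde{g}}\|^2$. Expanding,
\begin{align*}
\|\boldsymbol{\tilde{g}}\|^2=\boldsymbol{\alpha}^{\top}\S\boldsymbol{\alpha}=\sum_{r}\alpha_r(\S\boldsymbol{\alpha})_r=\sum_r \alpha_r u^r .
\end{align*}
At the exact first-step solution this equals $2$, the number of players. After the second step it equals $\sum_r\alpha_r u^r$, which stays bounded using the constraint structure. Lemma \ref{lemma:Boundedness} keeps the weights bounded away from $0$, so these quantities are well defined and finite.

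The third step combines the two bounds. The required inequality becomes $\eta^{(k)}\cdot\frac{C}{2}\sum_s \alpha_s u^s\le u^r$. With $\eta^{(k)}\le \frac{1}{C\alpha_r^{(k)}}$, the left side is at most $\frac{1}{2\alpha_r}\sum_s\alpha_s u^s$. At the balanced point this is $1/\alpha_r\le u^r$, and the claim follows for both players.

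I expect the main obstacle to be controlling $\|\boldsymbol{\tilde{g}}\|^2$ after the second, skewed step. There $\alpha_I$ changes while $\alpha_T$ stays fixed, so the clean identity $\alpha_r u^r=1$ no longer holds exactly. My first attempt would be to normalize by the scale freedom in Theorem \ref{thm:solution} and use the approximate equality $\alpha_r(\S\boldsymbol{\alpha})_r\approx p$ at the active constraints. This gives $\|\boldsymbol{\tilde{g}}\|^2\approx 2p$ and $u^r\approx p/\alpha_r$, so the ratio condition is again met with exactly the stated step size $1/(C\alpha_r)$. Because of this approximation, the proof may need the mild assumption that the second-step constraints are active, or equivalently that the iterate remains near the Nash bargaining point.
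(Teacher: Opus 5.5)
Your proposal is essentially the paper's proof. Both use the descent lemma from $C$-smoothness, then $u^r=p/\alpha_r$ and $\|\boldsymbol{\tilde{g}}\|^2=2p$, so that $\eta^{(k)}\le 1/(C\alpha_r^{(k)})$ gives $-\eta^{(k)}p/\alpha_r^{(k)}+Cp(\eta^{(k)})^2\le 0$. The paper just asserts $\G^{\top}\G\boldsymbol{\alpha}=p/\boldsymbol{\alpha}$ for the final weights without deriving it from Equation \ref{opt:transfer}, which is exactly the both-constraints-active assumption you flag; it must hold exactly (equal products $\alpha_I u^I=\alpha_T u^T$), since an approximate version does not give the stated step-size bound.
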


\begin{theorem}[Convergence]\label{thm:Convergence}
Assume unimodal loss $\LM^r\left ( \Theta^{\left ( k\right )}\right )$ is monotonically decreasing and bounded below. Then the multimodal loss $\LM\left ( \Theta\right )$ converges to $\LM\left ( \Theta^*\right )$ and $\Theta^*$ is the stationary point of $\LM\left ( \Theta\right )$.
\end{theorem}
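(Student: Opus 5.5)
The statement to prove is the last one, Theorem \ref{thm:Convergence}; I would follow the standard monotone-convergence route of the Nash-MTL style analyses. Write $\LM(\Theta^{(k)})=\LM^I(\Theta^{(k)})+\LM^T(\Theta^{(k)})$.

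\textbf{Convergence of the loss values.} By hypothesis each sequence $\LM^r(\Theta^{(k)})$ is monotonically nonincreasing and bounded below; this is exactly what Theorem \ref{thm:Pareto} supplies under its step-size condition. By the monotone convergence theorem each converges to some limit $\LM^{r,*}$, so the sum $\LM(\Theta^{(k)})$ is also nonincreasing, bounded below, and converges to $\LM^{I,*}+\LM^{T,*}$.

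\textbf{Decrease inequality.} To identify the limit with a stationary point I would use the descent inequality behind Theorem \ref{thm:Pareto}. Take the update $\Theta^{(k+1)}=\Theta^{(k)}-\eta^{(k)}\boldsymbol{\tilde g}^{(k)}$. By $C$-smoothness,
\begin{align*}
\LM^r(\Theta^{(k)})-\LM^r(\Theta^{(k+1)})\ge \eta^{(k)}\g_r^{\top}\boldsymbol{\tilde g}-\tfrac{C}{2}(\eta^{(k)})^2\|\boldsymbol{\tilde g}\|^2 .
\end{align*}
Theorem \ref{thm:solution} gives $\g_r^{\top}\boldsymbol{\tilde g}=(\S\boldsymbol\alpha)_r=1/\alpha_r$ at the first-step bargaining point. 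The relaxed constraint in Equation \ref{opt:transfer} gives $\alpha_r(\S\boldsymbol\alpha)_r\ge p$ after the second step. Hence each utility is positive and of order $1/\alpha_r$. Lemma \ref{lemma:Boundedness} bounds $\alpha_r$ from below. Together with $\|\g_r\|\le\sigma$, this keeps $\|\boldsymbol{\tilde g}\|$ and the utilities comparable, giving a per-step decrease of the form $c\,\eta^{(k)}\|\boldsymbol{\tilde g}^{(k)}\|^2$ with $c>0$.

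\textbf{Telescoping.} Sum the decrease over $k$. The total is at most $\LM(\Theta^{(1)})-\inf\LM<\infty$. Assuming $\eta^{(k)}$ is bounded below by a positive constant (e.g.\ $\eta^{(k)}=1/(C\max_r\alpha_r^{(k)})$, with $\alpha_r$ bounded above on the relevant iterates), this forces $\|\boldsymbol{\tilde g}^{(k)}\|\to0$. Because $\boldsymbol{\tilde g}^{(k)}=\alpha_I\g_I+\alpha_T\g_T$ with $\alpha_r$ bounded away from zero, and the utilities satisfy $\alpha_r\g_r^{\top}\boldsymbol{\tilde g}\ge p$ or $=1$, $\boldsymbol{\tilde g}\to0$ must be interpreted as the Pareto-stationary condition. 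That is, there exist convex weights $\beta_r$ with $\sum_r\beta_r\nabla\LM^r(\Theta^*)=0$ at any limit point $\Theta^*$, which exists if the iterates are bounded; I would assume this or restrict to a sublevel set. Continuity of $\nabla\LM^r$, which follows from Lipschitz smoothness, passes the condition to the limit. Continuity of $\LM$ then gives $\LM(\Theta^{(k)})\to\LM(\Theta^*)$.

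\textbf{Main obstacle.} The hard step is upgrading Pareto stationarity to stationarity of the unweighted sum $\LM=\LM^I+\LM^T$, as the theorem claims. I would first try to show that the normalized weights $\alpha^{(k)}/\|\alpha^{(k)}\|$ converge to $(1/2,1/2)$, but this is not guaranteed, especially with $p>1$ skewing toward the image. If that fails, I would state the conclusion in the Pareto-stationary sense of \cite{Nash:conf/icml/NavonSAMKCF22}, which is the standard meaning of ``stationary point'' in this bargaining literature. The usable core of the argument is the lower bound on $\alpha_r$ from Lemma \ref{lemma:Boundedness} combined with the telescoping descent.
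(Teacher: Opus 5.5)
Your outline has the same skeleton as the paper's proof: monotone convergence of the losses, a descent inequality, then stationarity read off a vanishing improvement. However, the step meant to deliver stationarity fails. Under the bargaining normalization, $\|\boldsymbol{\tilde g}\|^2=\boldsymbol\alpha^{\top}\S\boldsymbol\alpha=\sum_r\alpha_r(\S\boldsymbol\alpha)_r$ is $2$ at the solution of Theorem~\ref{thm:solution}, and at least $2p$ under the constraint of Equation~\ref{opt:transfer}. So $\|\boldsymbol{\tilde g}^{(k)}\|\to0$ can never happen. Your auxiliary assumptions ($\eta^{(k)}$ bounded below, $\alpha_r$ bounded above) combine with your claimed decrease $c\,\eta^{(k)}\|\boldsymbol{\tilde g}^{(k)}\|^2\ge2c\,\eta^{(k)}$ to make every step lower $\LM$ by a fixed positive amount. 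That contradicts boundedness below, so the argument is vacuous. Your next sentence already exposes the tension: $\alpha_r\g_r^{\top}\boldsymbol{\tilde g}\ge1$ with $\|\g_r\|\le\sigma$ and $\boldsymbol{\tilde g}\to0$ forces $\alpha_r\to\infty$.

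Lemma~\ref{lemma:Boundedness} also points the wrong way for a descent constant. A lower bound on $\alpha_r$ is just the Cauchy--Schwarz bound $u^r\le\|\g_r\|\,\|\boldsymbol{\tilde g}\|$ in disguise. By contrast, $\sum_r u^r\ge\|\boldsymbol{\tilde g}\|^2/\max_r\alpha_r$ requires an \emph{upper} bound on $\alpha_r$. The paper uses the lemma only to cap the step size, $\eta^{(k)}\le\sqrt2\sigma/C$. The paper's proof makes the same leap you do (``$\eta^{(k)}\|\boldsymbol{\tilde g}^{(k)}\|^2\to0$, hence $\boldsymbol{\tilde g}=0$ at $\Theta^*$''), overlooking that $\eta^{(k)}\to0$ accounts for the vanishing improvement. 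Following it will not repair this.

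What degenerates is $\boldsymbol\alpha$, not $\boldsymbol{\tilde g}$. Take $u^r=p/\alpha_r$ and $\|\boldsymbol{\tilde g}\|^2=2p$, the equalities the paper uses. Smoothness then gives $\LM^r(\Theta^{(k)})-\LM^r(\Theta^{(k+1)})\ge\eta^{(k)}p\,(1/\alpha_r^{(k)}-C\eta^{(k)})$. Choose $\eta^{(k)}=1/(2C\max_r\alpha_r^{(k)})$. The bare upper bound $\eta^{(k)}\le1/(C\alpha_r^{(k)})$ of Theorem~\ref{thm:Pareto} is not enough, since it permits summable steps that stall at a non-stationary point. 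With this choice the decrease is at least $p/(4C(\max_r\alpha_r^{(k)})^2)$, so telescoping forces $\max_r\alpha_r^{(k)}\to\infty$. The convex weights $\beta^{(k)}=\boldsymbol\alpha^{(k)}/\|\boldsymbol\alpha^{(k)}\|_1$ then satisfy $\|\sum_r\beta^{(k)}_r\g_r^{(k)}\|=\sqrt{2p}/\|\boldsymbol\alpha^{(k)}\|_1\to0$, and your limit-point argument yields Pareto stationarity.

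That is as far as the dynamics go, and your ``main obstacle'' is genuine. The paper obtains stationarity of $\LM$ only by writing $(\nabla\LM(\Theta^{(k)}))^{\top}\boldsymbol{\tilde g}^{(k)}=\|\boldsymbol{\tilde g}^{(k)}\|^2$, which presumes $\boldsymbol\alpha=(1,1)^{\top}$. Consider a Pareto-stationary limit with $\g_I=-\lambda\g_T$, $0<\lambda\neq1$, $\g_T\neq0$: it has $\nabla\LM=(1-\lambda)\g_T\neq0$. So even once repaired, your proposal proves a weaker conclusion than the theorem states.
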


These theoretical results show that SKT reduces loss in Equation \ref{obj:mml}, thereby addressing modality imbalance and improving the overall gain. Since the second step Nash Bargaining favors the image, the combined gradient is biased toward the image model and empirically improves its performance when used to update $\Theta$. We provide proofs in the appendix.

\section{Experiments}
\subsection{Experimental Settings} \label{exp:setup}
\noindent\textbf{Dataset:} We conduct experiments on six datasets: Data Visual Marketing (DVM) \cite{DVM:conf/bigdataconf/HuangCLYO22}, SUNAttribute \cite{SUN:journals/ijcv/PattersonXSH14}, CelebA \cite{CelebA:conf/iccv/LiuLWT15}, PetFinder-adoption (Adoption), PetFinder-pawpularity (Pawpularity), and Avito \cite{CHARMS:conf/icml/JiangYW00Z24}. 

\textbf{DVM} is built from 335,562 used car advertisements. The tabular data includes sales and technical specifications. Following previous work \cite{CHARMS:conf/icml/JiangYW00Z24}, car models with less than 700 samples were removed. \textbf{SUNAttribute} is constructed from 717 SUN dataset categories, where the "open area" attribute serves as the label. \textbf{CelebA} is a facial attribute dataset containing 202,599 face images of 10,177 celebrities, with the "Attractive" attribute used as the label. \textbf{PetFinder-adoption} dataset comes from a Kaggle competition where the task is to predict the speed at which a pet is adopted. \textbf{PetFinder-pawpularity} dataset also comes from a Kaggle competition where the task was to predict the popularity of a pet. \textbf{Avito} is a dataset from Russia's largest classified advertisements website. Except for the CHARMS method, which requires raw data, we standardize numerical tabular fields using z-score normalization with a mean value of 0 and standard deviation of 1. Details are provided in the appendix.

\noindent\textbf{Evaluation:} In addition to image learning, we also compare with several cross-modal transfer methods: knowledge distillation (KD) \cite{KD:journals/corr/HintonVD15}, modality focus hypothesis (MFH) \cite{MFH:conf/iclr/XueGRZ23}, fixed model reuse (FMR) \cite{FMR:conf/aaai/YangZFJZ17}, multimodal contrastive learning (MMCL) \cite{MMCL:conf/cvpr/HagerMR23} and channel tabular alignment with optimal transport (CHARMS) \cite{CHARMS:conf/icml/JiangYW00Z24}. We use accuracy to measure the performance for classification tasks. For the regression task, we use root mean square error (RMSE) for performance evaluation.

\begin{table*}[ht]
\centering
\begin{tabular}{lc c c c c c}
\toprule
\multirow{1}{*}{\textbf{Method}} & \multicolumn{1}{c}{\textbf{DVM} $\uparrow$} & \multicolumn{1}{c}{\textbf{SUN} $\uparrow$} & \multicolumn{1}{c}{\textbf{CelebA} $\uparrow$} & \multicolumn{1}{c}{\textbf{Adoption} $\uparrow$} & \multicolumn{1}{c}{\textbf{Pawpularity} $\downarrow$} & \multicolumn{1}{c}{\textbf{Avito} $\downarrow$} \\
\midrule
Unimodal & 0.8743 & 0.8361 & 0.8146 & 0.3477 & 18.6150 & 0.2512 \\
\midrule
KD     & \unimodal{0.8390} & 0.8382 & \unimodal{0.8118} & 0.3532 & \unimodal{19.0683} & 0.2499 \\
MFH    &   -    & \unimodal{0.8312} & \unimodal{0.7507} & \unimodal{0.3041} & \unimodal{43.1455} & \unimodal{0.2873} \\
FMR    & \unimodal{0.8427} & \unimodal{0.8347} & \unimodal{0.8003} & 0.3526 & \unimodal{19.3517} & \unimodal{0.2937} \\
MMCL   & \unimodal{0.8203} & 0.8431 & \unimodal{0.8041} & \unimodal{0.2981} &    -    &    -   \\
CHARMS & \underline{0.9175} & \underline{0.8661} & \underline{0.8220} & \underline{0.3603} & \underline{18.4314} & \underline{0.2495} \\
\midrule
Baseline    & \unimodal{0.8552} & {0.8438} & \unimodal{0.8006} & {0.3514} & \unimodal{19.1438} & \unimodal{0.2539} \\ \midrule
SKT    & \makecell{\bf0.9664 \\ \std{0.0012}} & \makecell{\bf0.8682 \\ \std{0.0011}} & \makecell{\bf0.8273 \\ \std{0.0014}} & \makecell{\bf0.3610 \\ \std{0.0043}} & \makecell{\bf18.2809 \\ \std{0.0578}} & \makecell{\bf0.2437 \\ \std{0.0007}} \\
\bottomrule
\end{tabular}
\caption{Comparisons with baseline methods on DVM, SUNAttribute, CelebA, Adoption, Pawpularity, and Avito datasets. The first four are classification tasks while the last two are regression tasks. The best and second-best results are highlighted in bold and underline, respectively. The gray background denotes the performance based on cross-modal transfer underperforming unimodal learning.}
\label{tab:main-exp}
\end{table*}

\noindent\textbf{Implementation Details:} Following the setting of MMCL \cite{MMCL:conf/cvpr/HagerMR23}, we adopt ResNet50 as the image encoder and MLPs as the tabular encoder for all datasets. The image encoder is initialized with weights pretrained on ImageNet-1k, while the tabular encoder is randomly initialized. The task head consists of a fully connected layer with a hidden size of 2048, which is shared across all modalities. In addition, the best model is selected from the validation set using early stopping. Specifically, the batch size is searched in \{64, 128, 256\}, the learning rate is searched in \{1e-4, 5e-4, 1e-3, 5e-3, 1e-2, 5e-2\} and the weight decay is 1e-4. The image modality preference $p$ is searched in \{1.1, 1.2, 1.25, 1.33, 1.5, 2.0\}. In the course of the experiment, we implement our method with PyTorch and conduct experiments with a single NVIDIA RTX A6000. More details are provided in the appendix.

\subsection{Main Results}
The results on all datasets are reported in Table \ref{tab:main-exp}, where “-” indicates that the corresponding method cannot handle complex classification or regression tasks. The first four datasets are for classification, while the last two are for regression. The best and second-best results are highlighted in bold and underline, respectively. Gray background denotes transfer results that underperform unimodal learning. ``Unimodal'' refers to image learning, and ``Baseline'' denotes the naive shared head strategy. From Table \ref{tab:main-exp}, we can draw the following observations: 1) Compared with unimodal learning, equality-based cross-modal transfer methods are not consistently able to effectively leverage tabular information to improve the image model, and may even introduce negative transfer. 2) The naive shared head strategy also fails to ensure effective knowledge transfer, as it treats all modalities equally and ignores modality imbalance. 3) Our method consistently improves over the unimodal baseline across all datasets and outperforms other transfer baselines.

\begin{table}[t]
\centering
\begin{tabular}{cc|c|c|c}
\toprule
\multirow{1}{*}{Balance} & \multirow{1}{*}{Transfer} & \multicolumn{1}{c|}{\textbf{DVM} $\uparrow$} & \multicolumn{1}{c|}{\textbf{SUN} $\uparrow$} & \multicolumn{1}{c}{\textbf{Avito} $\downarrow$} \\
\midrule
\tikzxmark & \tikzxmark & 0.8552 & 0.8438 & 0.2539 \\
\tikzcmark & \tikzxmark & 0.9416 & 0.8445 & 0.2461 \\
\tikzxmark & \tikzcmark & 0.9320 & 0.8417 & 0.2535 \\ 
\tikzcmark & \tikzcmark & \bf 0.9664 & \bf 0.8682 & \bf 0.2437  \\
\bottomrule
\end{tabular}
\caption{Results of ablation studies on DVM, SUNAttribute and Avito datasets.}
\label{tab:ablation}
\end{table}

\begin{table}[t]
\centering
\begin{tabular}{ccccc}
\toprule
\multirow{1}{*}{Method} & \multicolumn{1}{c}{\textbf{DVM}} & \multicolumn{1}{c}{\textbf{SUN}} & \multicolumn{1}{c}{\textbf{CelebA}} & \multicolumn{1}{c}{\textbf{Adoption}} \\
\midrule
Unimodal & 0.8743 & 0.8361 & 0.8146 & 0.3477 \\
w/ SAM & 0.8964 & 0.8466 & 0.8041 & 0.3346  \\
\midrule
SKT & \bf 0.9664 & \bf 0.8682 & \bf 0.8273 & \bf 0.3610  \\
\bottomrule
\end{tabular}
\caption{Comparison of the SKT and SAM strategy on classification tasks.}
\label{tab:sam}
\end{table}

\subsection{Ablation Study}
We investigate the effectiveness of SKT by analyzing the impact of the key components in two-step Nash Bargaining, i.e., Equation \ref{opt:balance} and \ref{opt:transfer}, which are denoted as balancing and asymmetry, respectively. The results on DVM, SUNAttribute and Avito datasets are reported in Table \ref{tab:ablation}. From the results of Table \ref{tab:ablation}, we can find that: 1) Compared with the baseline, balancing multimodal learning via Equation \ref{opt:balance} improves image performance. 2) Introducing asymmetry via Equation \ref{opt:transfer} also improves transfer performance, but the gain is smaller than that from balanced multimodal learning. This suggests that a one-step asymmetric bargaining is constrained by modality imbalance. 3) By considering modality imbalance and asymmetric modality relationships, SKT achieves the largest model performance improvement.

\begin{figure*}[t]
\centering
\begin{minipage}[t]{0.49\textwidth}
\centering
\begin{subfigure}[t]{.49\linewidth}
\centering
\includegraphics[width=\linewidth]{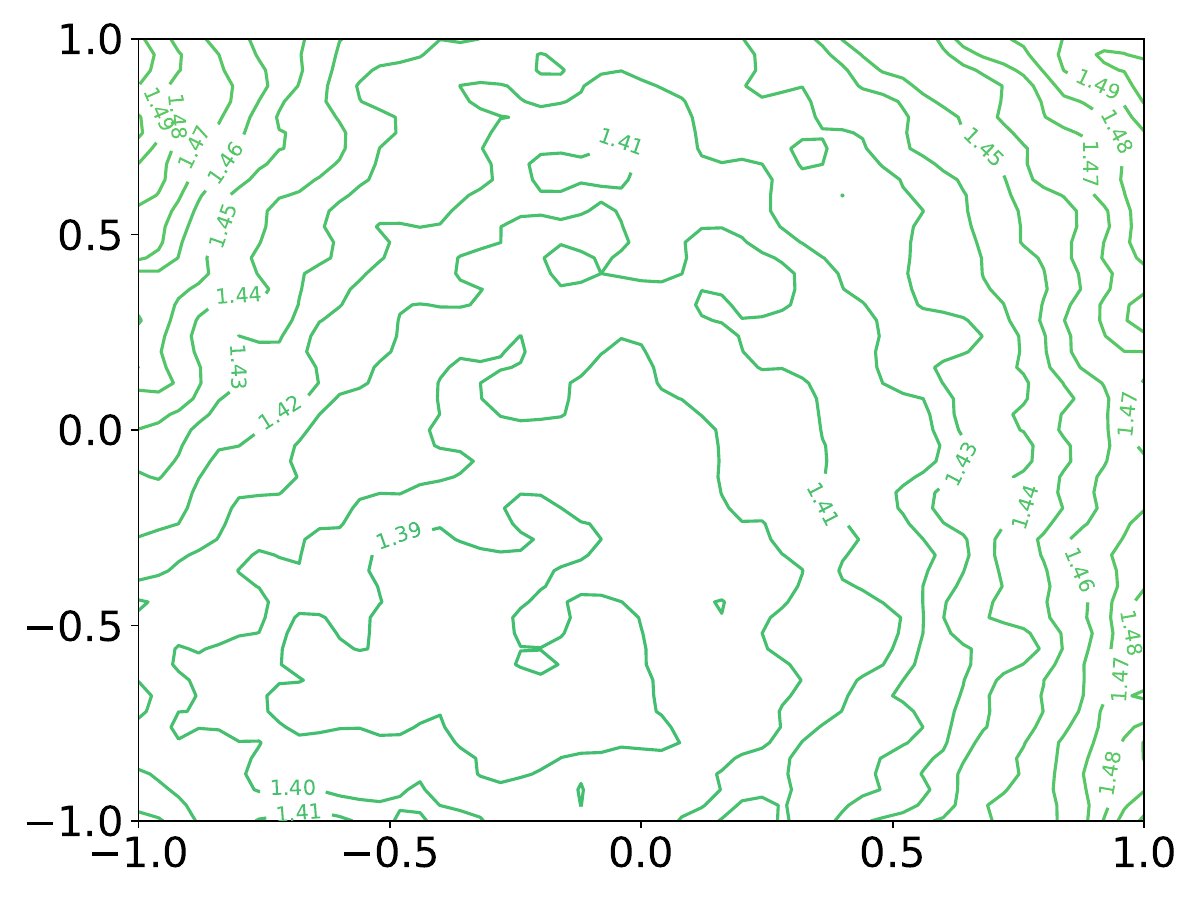}
\caption{CHARMS}
\end{subfigure}%
\hfill
\begin{subfigure}[t]{.49\linewidth}
\centering
\includegraphics[width=\linewidth]{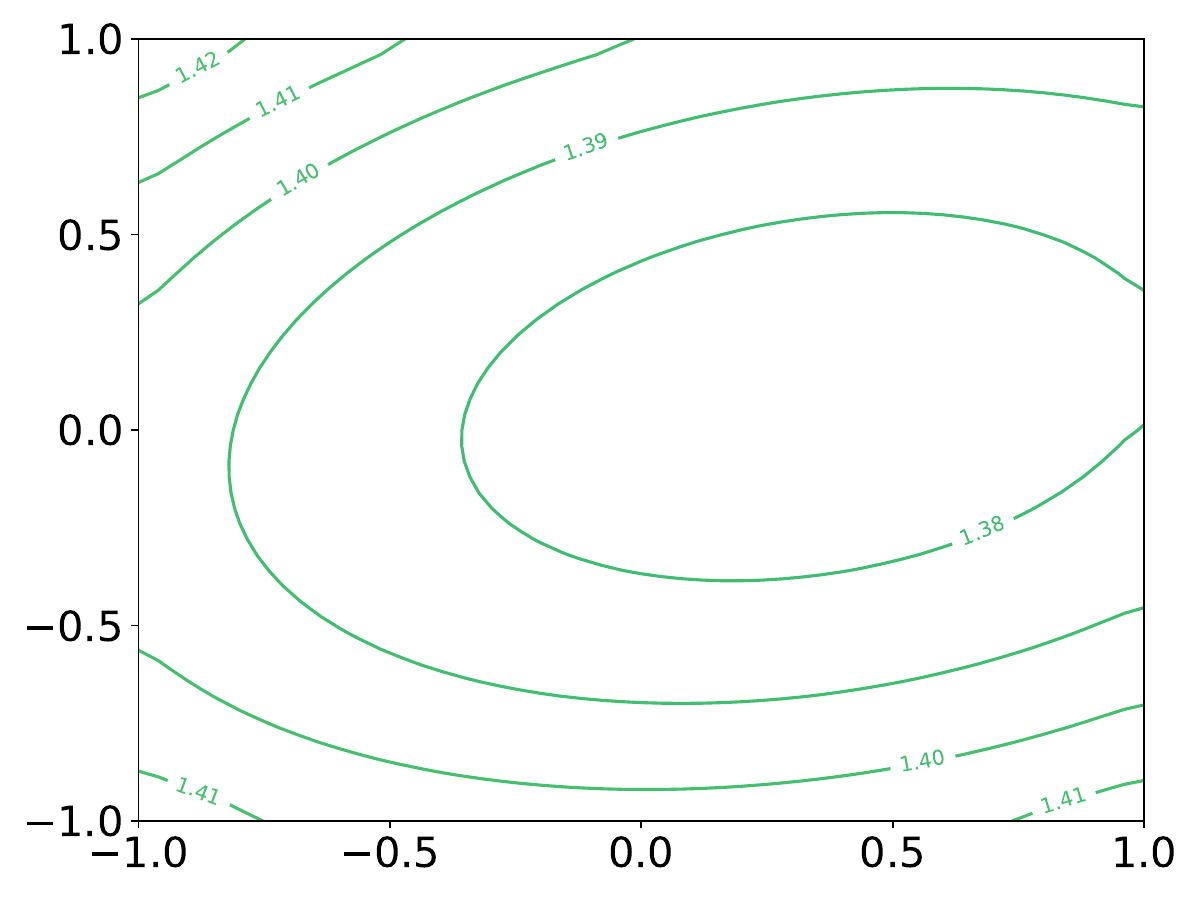}
\caption{Ours}
\end{subfigure}
\caption{Loss landscape on Adoption dataset.}
\label{fig:land}
\end{minipage}
\hfill
\begin{minipage}[t]{0.49\textwidth}
\centering
\begin{subfigure}[t]{.49\linewidth}
\centering
\includegraphics[width=\linewidth]{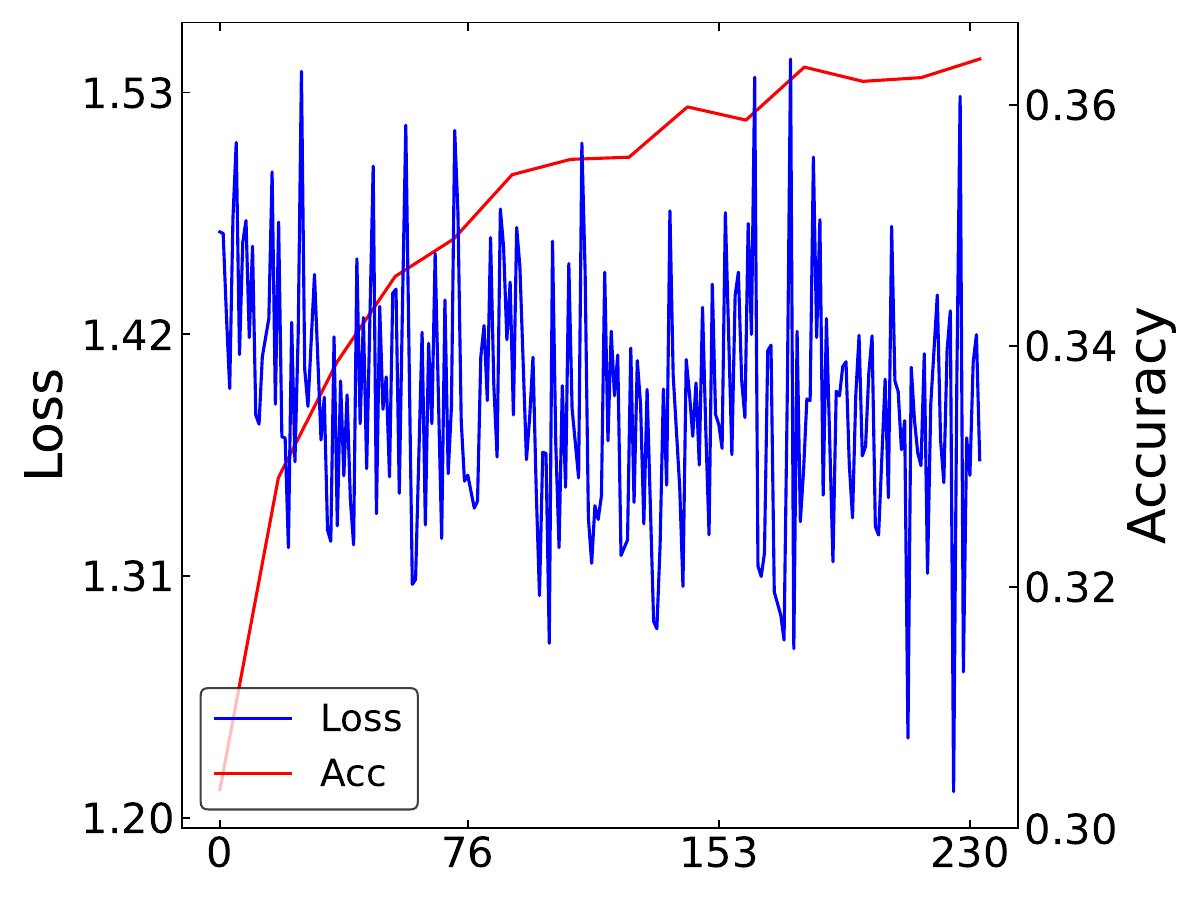}
\caption{CHARMS}
\end{subfigure}%
\hfill
\begin{subfigure}[t]{.49\linewidth}
\centering
\includegraphics[width=\linewidth]{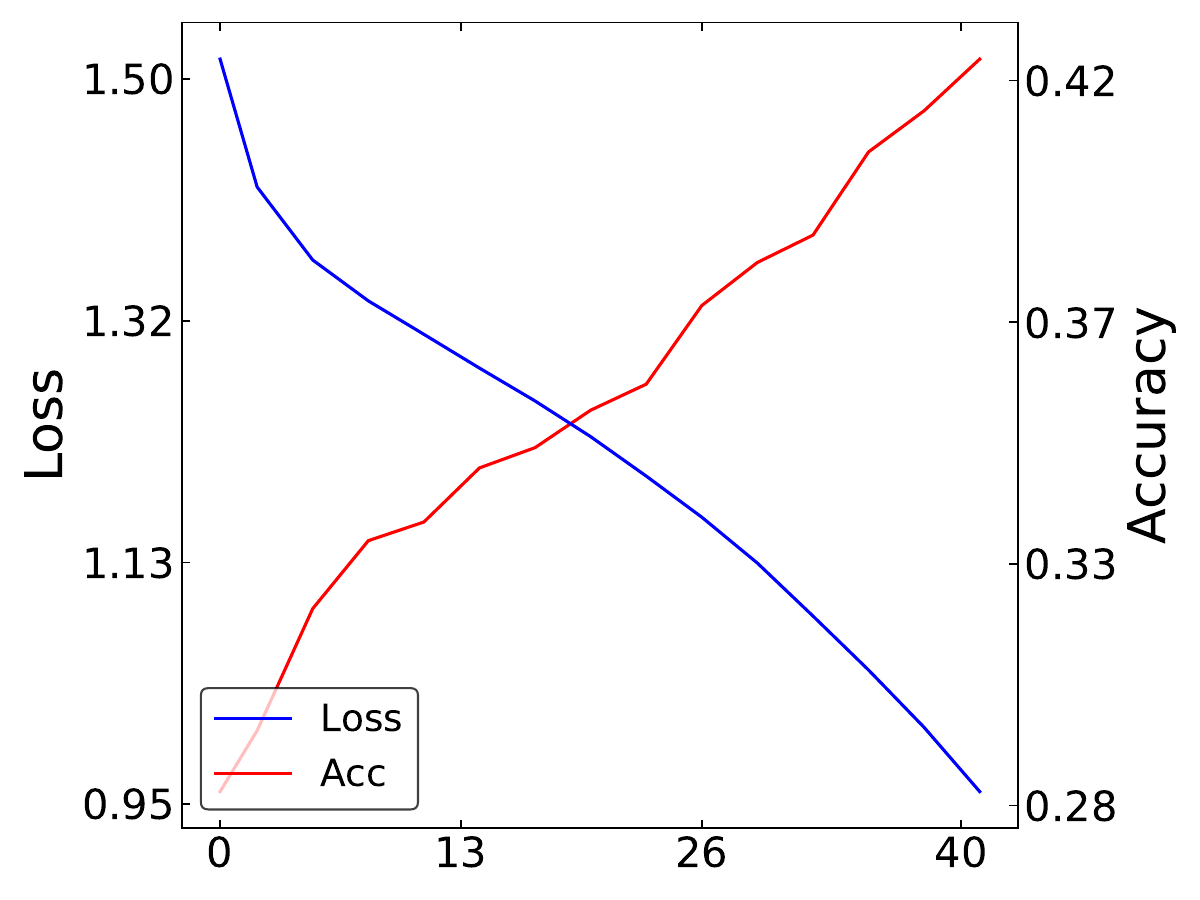}
\caption{Ours}
\end{subfigure}
\caption{Loss and accuracy on Adoption dataset.}
\label{fig:loss}
\end{minipage}
\end{figure*}

\subsection{Further Analysis}
\noindent\textbf{Comparison with SAM:} Sharpness aware minimization (SAM) \cite{Land:conf/iclr/ForetKMN21} improves the generalization ability by minimizing the loss value and sharpness. We apply SAM to unimodal learning to improve model quality and conduct experiments on classification tasks. As shown in Table \ref{tab:sam}, SKT that introduces cross-modal tabular information outperforms applying SAM to unimodal learning.

\noindent\textbf{Visualization Results:} In Figure \ref{fig:intro} and Figure \ref{fig:land}, we visualize the 2D loss landscapes of CHARMS and SKT on DVM and Adoption datasets, respectively, using DNN visualization \cite{Land:conf/nips/Li0TSG18,Land:conf/iclr/ForetKMN21}. We can find that the loss change of SKT is smaller than that of CHARMS. That is to say, the loss landscape of our proposed method SKT is flatter than that of CHARMS. 

\begin{figure}[t] 
\includegraphics[width=\linewidth]{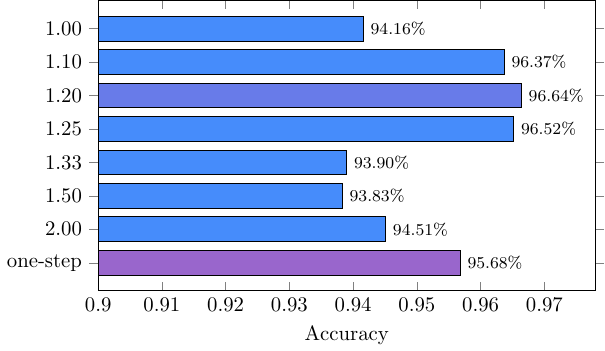}
\caption{Sensitivity analysis about $p$ and one-step asymmetric bargaining (one-step) analysis on DVM dataset.}
\label{fig:Nash}
\end{figure}

\noindent\textbf{Hyperparameter Sensitivity and Asymmetric Bargaining Analysis:} We study the influence of preference $p$ on DVM dataset. The accuracy with different $p \in \left [ 1.0, 2.0\right ]$ is reported in Figure \ref{fig:Nash}. An excessively large or small $p$ may weaken the auxiliary effect of the tabular modality. Although SKT shows some variation under different $p$, it remains highly effective and consistently outperforms the unimodal learning and shared head baseline.

We further investigate the effect of the asymmetric negotiation strategy on performance. In the two-step Nash Bargaining strategy, we introduce a modality preference value $p$ to control the direction of knowledge transfer. This naturally raises the following question: if we directly assign different preference values to tabular and image, can the transfer problem be addressed using one-step optimization, i.e., a Nash Bargaining Solution? Specifically, we can modify the Equation \ref{opt:balance} constraint to $-\log(\alpha_i)-\log((\S\boldsymbol{\alpha})_i)+\log(p_i)\le 0$, where tuning $p_i$ adjusts the modality preference. The entire process degenerates into a one-step optimization. By setting a larger $p_I$ and a smaller $p_T$ we can force the image modality to gain more benefit than the tabular modality. We conduct experiments on DVM dataset to analyze the effectiveness of this one-step asymmetric bargaining. In practice, we also apply hyperparameter search and early stopping to $p_I$ and $p_T$ to obtain the best results for one-step asymmetric bargaining. The results are shown as ``one-step'' in Figure \ref{fig:Nash}. We observe that one-step asymmetric bargaining also improves the unimodal model, but the gain remains smaller than that of the two-step Nash Bargaining strategy. This may be because it does not explicitly consider modality imbalance, and assigning different preference values affects all modalities, which can easily over-suppress or over-emphasize tabular information. More details are provided in the appendix.

\begin{table}[t]
\centering
\begin{tabular}{ccccc}
\toprule
\multirow{1}{*}{Method} & \multicolumn{1}{c}{\textbf{DVM}} & \multicolumn{1}{c}{\textbf{SUN}} & \multicolumn{1}{c}{\textbf{CelebA}} & \multicolumn{1}{c}{\textbf{Adoption}} \\
\midrule
Unimodal & 0.8743 & 0.8361 & 0.8164 & 0.3477 \\
\midrule
CLIP-LP & 0.7619 & 0.6918 & 0.7590 & 0.3047  \\
CLIP-FT & 0.8417 & 0.8333 & 0.8165 & 0.2935  \\
\midrule
SKT & \bf 0.9664 & \bf 0.8682 & \bf 0.8255 & \bf 0.3610  \\
\bottomrule
\end{tabular}
\caption{Comparison of our method SKT with the CLIP on the classification task.}
\label{tab:CLIP}
\end{table}

\noindent\textbf{Comparison with CLIP:} CLIP \cite{CLIP:conf/icml/RadfordKHRGASAM21}, pretrained on large-scale image-text pairs, learns cross-modal mapping from text to images. Previous works \cite{CLIP:conf/nips/Wang022,CHARMS:conf/icml/JiangYW00Z24} convert tabular data into text and attach a classification head to CLIP encoders, thereby formulating the task as language-to-vision cross-modal transfer based on CLIP. The results are reported in Table \ref{tab:CLIP}. CLIP-LP denotes fixing both encoders and training only the classification head, while CLIP-FT fine-tunes the entire CLIP network. We can find that CLIP performs poorly, and CLIP-FT generally outperforms CLIP-LP. This may be because tabular data remains difficult for CLIP to encode effectively even after being converted into text. In addition, CLIP treats all modalities equally, and recent studies also identify the modality gap \cite{Align:conf/nips/LiangZKYZ22,Align:conf/iclr/ZhangSY24}.

\begin{figure}[t] 
\includegraphics[width=0.96\linewidth]{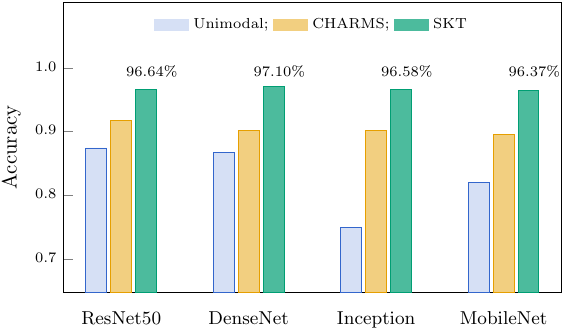}
\caption{Comparison of different network structures on DVM dataset. It is worth noting that our approach remains unaffected by backbone model.}
\label{fig:model}
\end{figure}

\noindent\textbf{Robustness of the Different Models:} To demonstrate the applicability and robustness of SKT, we conduct experiments using various network architectures beyond ResNet-50 on DVM dataset, including DenseNet-121 \cite{Densenet:conf/cvpr/HuangLMW17}, Inception-v1 \cite{Inception:conf/cvpr/SzegedyLJSRAEVR15}, and MobileNet-v2 \cite{MobileNet:conf/cvpr/SandlerHZZC18}. As shown in Figure \ref{fig:model}, our method consistently improves performance across different network architectures. Although the performance of unimodal learning varies significantly across different network architectures, the proposed method SKT achieves comparable and competitive results in all cases, highlighting its ability to transfer tabular knowledge to images effectively. 

\begin{table}[t]
\centering
\begin{tabular}{lccccc}
\toprule
\multirow{1}{*}{Method} & \multicolumn{1}{c}{Training Time} & \multicolumn{1}{c}{Accuracy} \\
\midrule
Unimodal & 2$h$47$m$ & 0.8743 \\
Baseline & 3$h$12$m$ & 0.8552 \\
CHARMS & 47$h$10$m$ & 0.9175 \\
\midrule
SKT & 3$h$31$m$ & \bf 0.9664 \\
\bottomrule
\end{tabular}
\caption{Results of training overhead on DVM dataset.}
\label{tab:time}
\end{table}

\noindent\textbf{Convergence and Overhead:} To validate the convergence of the method SKT, we conduct experiments on DVM and Adoption datasets. Figure \ref{fig:intro} and \ref{fig:loss} report the variation of the image loss during training. It is worth noting that we employ an early stopping strategy to prevent overfitting. It can be observed that, as training progresses, the image loss in SKT decreases rapidly and stably, whereas that in CHARMS fluctuates significantly and requires more iterations. Moreover, we compare the training cost of SKT with unimodal learning (Unimodal), shared head learning (Baseline), and CHARMS on DVM dataset. As shown in Table \ref{tab:time}, SKT achieves the best accuracy while maintaining competitive training time. During inference, our method introduces no additional parameters or post-processing strategies, thereby incurring no extra inference cost.

\section{Conclusion} \label{sec:conclusion}
In this paper, we propose Skewed Knowledge Transfer (SKT), a novel cross-modal transfer method to address tabular data is unavailable at inference. By considering modality imbalance and asymmetric cross-modal relationships, SKT enables directed transfer of tabular knowledge to facilitate image learning. Specifically, we first introduce a multimodal shared head, which enables heterogeneous models to benefit from cross-modal structure without incurring additional parameter overhead. We then design a two-step Nash Bargaining strategy to promote cross-modal transfer. In the first step, SKT seeks a balanced gradient weighting solution to mitigate modality imbalance. In the second step, a modality preference is introduced and only the image weight is adjusted, making the combined gradient more favorable to the image model. We further provide theoretical analysis of the Pareto improvement and convergence properties of SKT. Experiments on widely used tabular-image datasets show that the proposed SKT consistently improves image performance by leveraging tabular information.

\noindent\textbf{Limitations:} Our method relies on validation-based tuning to select the optimal preference value, which may limit flexibility. Beyond Nash Bargaining, other game-theoretic or gradient-integration methods may also be effective and require asymmetric designs. We leave these as future work.

\bibliography{aaai2027}


\clearpage
\appendix

\renewcommand{\thefigure}{\Roman{figure}}
\renewcommand{\thetable}{\Roman{table}}

\setcounter{figure}{0}
\setcounter{table}{0}
\setcounter{theorem}{0} 
\setcounter{lemma}{0}

\section{Proofs} \label{app:proof}
\begin{theorem}[Optimality condition]\label{proof:optimality}
Let some scalar $\mu>0$ and $L(\boldsymbol{\tilde{g}})=\sum_{r\in\{I,T\}}\log({u^r(\boldsymbol{\tilde{g}})})$. The optimal solution solution $\boldsymbol{\tilde{g}}^{*}$ to Equation \ref{obj:object} must satisfy:
\begin{align}
\nabla L(\boldsymbol{\tilde{g}}^{*})=\mu \boldsymbol{\tilde{g}}^{*}, \boldsymbol{\tilde{g}}^{*}=\G\boldsymbol{\alpha}. \label{proof:opt}
\end{align} 
\end{theorem}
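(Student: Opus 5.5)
We follow the argument of Nash-MTL \cite{Nash:conf/icml/NavonSAMKCF22}, viewing Equation \ref{obj:object} as maximizing a concave function over the ball $\B_{\epsilon}$. We restrict attention to the feasible region $\{\boldsymbol{\tilde{g}}: u^r(\boldsymbol{\tilde{g}})>0,\ r\in\{I,T\}\}$, where $L$ is finite; we assume this region is nonempty, i.e.\ there is a direction with $\g_I^\top\boldsymbol{\tilde{g}}>0$ and $\g_T^\top\boldsymbol{\tilde{g}}>0$.

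\textbf{Step 1: concavity.} On this region, $L(\boldsymbol{\tilde{g}})=\log(\g_I^\top\boldsymbol{\tilde{g}})+\log(\g_T^\top\boldsymbol{\tilde{g}})$ is a sum of logarithms of linear functions, hence concave. The ball is convex and Slater's condition holds (take a small scaled feasible direction), so the KKT conditions are necessary and sufficient.

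\textbf{Step 2: the constraint is active.} Each $u^r$ is linear, so $L(c\,\boldsymbol{\tilde{g}})=L(\boldsymbol{\tilde{g}})+2\log c$, which strictly increases in $c>0$. An interior maximizer could therefore be scaled up to improve $L$, so the optimum satisfies $\|\boldsymbol{\tilde{g}}^*\|=\epsilon$. The KKT stationarity condition for $\max L$ subject to $\|\boldsymbol{\tilde{g}}\|^2\le\epsilon^2$ then gives $\nabla L(\boldsymbol{\tilde{g}}^*)=2\lambda\,\boldsymbol{\tilde{g}}^*$ with multiplier $\lambda\ge0$. Setting $\mu=2\lambda$, we need $\mu>0$. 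Since $\nabla L=\sum_r \g_r/u^r$, pairing both sides with $\boldsymbol{\tilde{g}}^*$ yields $\mu\epsilon^2=\sum_r u^r/u^r=2$, so $\mu=2/\epsilon^2>0$.

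\textbf{Step 3: the optimum lies in the span of the gradients.} Directly, $\boldsymbol{\tilde{g}}^*=\mu^{-1}\nabla L(\boldsymbol{\tilde{g}}^*)=\sum_r \frac{1}{\mu\,u^r(\boldsymbol{\tilde{g}}^*)}\g_r$, so $\boldsymbol{\tilde{g}}^*=\G\boldsymbol{\alpha}$ with $\alpha_r=1/(\mu u^r)>0$. This also shows the weights are nonnegative, consistent with the setup.

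The main obstacle is justifying that the feasible set with positive utilities is nonempty and that a maximizer exists, since $L\to-\infty$ at the boundary where $u^r\to0$. If $\g_I$ and $\g_T$ are exactly opposed, no such direction exists and the statement is vacuous, so I would state this as a standing assumption. Existence then follows because the superlevel sets of $L$ within the compact ball are closed and bounded away from the boundary where $u^r=0$.
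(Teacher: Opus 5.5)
Your proposal is correct and follows essentially the same route as the paper: a Lagrangian/KKT argument on the ball constraint, the scaling argument showing the constraint is active, and reading off $\boldsymbol{\tilde{g}}^*=\G\boldsymbol{\alpha}$ with $\alpha_r\propto 1/u^r(\boldsymbol{\tilde{g}}^*)$. Your version is somewhat tighter in two ways. Pairing the stationarity condition with $\boldsymbol{\tilde{g}}^*$ pins down $\mu=2/\epsilon^2>0$, whereas the paper only absorbs the sign into the scalar. You also treat $u^r>0$ as an open domain and address existence, instead of invoking complementary slackness on strict inequalities.
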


\begin{proof} 
We consider the following constrained optimization problem:
\begin{equation}
\begin{aligned}
\max_{\boldsymbol{\tilde{g}}}& \sum_{r\in\{I,T\}}\log({u^r(\boldsymbol{\tilde{g}})}) \\
s.t.\;&\left \| \boldsymbol{\tilde{g}} \right \|^{2} \leq \epsilon^{2} \\ &u^r(\boldsymbol{\tilde{g}}) > 0, r\in \left \{ I,T \right \}. \label{proof:opt1} 
\end{aligned}
\end{equation}
Since the objective is strictly increasing with respect to positive scaling of $\boldsymbol{\tilde{g}}$, any optimal solution must satisfy $\left \| \boldsymbol{\tilde{g}}^{*} \right \|^{2} = \epsilon^2$.

Introduce a Lagrange multiplier $\mu \in \mathbb{R}$ for the equality constraint, and multipliers $\rho_r \leq 0$ for the inequality constraints. The Lagrangian is
\begin{equation}
\begin{aligned}
\FM\left (\boldsymbol{\tilde{g}},\mu,\rho_I,\rho_T \right )=&\sum_{r\in\{I,T\}}\log({u^r(\boldsymbol{\tilde{g}})})
+\mu \left (\left \| \boldsymbol{\tilde{g}} \right \|^{2} - \epsilon^{2} \right )\\ 
&-\sum_{r\in\{I,T\}} \rho_r\left (u^r(\boldsymbol{\tilde{g}}) \right ).
\end{aligned}
\end{equation}
Then, using the KKT conditions, we obtain 
\begin{equation}
\begin{aligned}
\sum_{r\in\{I,T\}} \frac{\g_r}{u^r(\boldsymbol{\tilde{g}^*})}+2 \mu \boldsymbol{\tilde{g}^*} - \sum_{r\in\{I,T\}} \rho_r \g_r=0. \label{proof:KKT}
\end{aligned}
\end{equation}
Due to $u^r(\boldsymbol{\tilde{g}^*})>0$, we have $\rho_r=0$. Thus, Equation \ref{proof:KKT} simplifies to 
\begin{equation}
\begin{aligned}
\sum_{r\in\{I,T\}} \frac{\g_r}{u^r(\boldsymbol{\tilde{g}^*})}=-2 \mu \boldsymbol{\tilde{g}^*}.
\end{aligned}
\end{equation}
Absorbing the constant factor and sign into a scalar, there exists $\mu$ such that
\begin{equation}
\begin{aligned}
\nabla L(\boldsymbol{\tilde{g}}^{*})=\sum_{r\in\{I,T\}} \frac{\g_r}{u^r(\boldsymbol{\tilde{g}^*})}=\mu \boldsymbol{\tilde{g}}^{*}. \label{proof:comple}
\end{aligned}
\end{equation}
Due to $u^r(\boldsymbol{\tilde{g}^*})>0$, let $\alpha_r=\frac{1}{u^r(\boldsymbol{\tilde{g}^*})}>0$. Substituting into Equation \ref{proof:comple} and normalizing the scalar factor, we obtain
\begin{equation}
\begin{aligned}
\boldsymbol{\tilde{g}}^{*}=\G\boldsymbol{\alpha}.
\end{aligned}
\end{equation}
This completes the proof.
\end{proof}

\begin{theorem}[Solution characterization]\label{proof:solution}
The solution to Equation \ref{thm:opt}, up to scaling, is $\tilde{g}^{*}=\G\boldsymbol{\alpha}$ where $\boldsymbol{\alpha}$ is the solution to
\begin{align}
\G^{\top}\G\boldsymbol{\alpha}=\frac{1}{\boldsymbol{\alpha}}, \label{proof:obj}
\end{align}
with the element-wise reciprocal $\tfrac{1}{\boldsymbol{\alpha}}$.
\end{theorem}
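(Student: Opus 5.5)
Starting from the optimality condition of Theorem~\ref{proof:optimality}, I would substitute the explicit form of the objective's gradient. Since $u^r(\boldsymbol{\tilde{g}})=\g_r^{\top}\boldsymbol{\tilde{g}}$, we have $\nabla L(\boldsymbol{\tilde{g}})=\sum_{r\in\{I,T\}}\g_r/u^r(\boldsymbol{\tilde{g}})$. The condition $\nabla L(\boldsymbol{\tilde{g}}^{*})=\mu\boldsymbol{\tilde{g}}^{*}$ therefore says that $\boldsymbol{\tilde{g}}^{*}$ is a positive combination of the columns of $\G$. Writing $\boldsymbol{\tilde{g}}^{*}=\G\boldsymbol{\alpha}$, and assuming for the moment that $\g_I,\g_T$ are linearly independent so that $\G$ has full column rank, I would match coefficients:
\begin{align}
\mu\,\alpha_r=\frac{1}{u^r(\G\boldsymbol{\alpha})}=\frac{1}{(\G^{\top}\G\boldsymbol{\alpha})_r},\quad r\in\{I,T\}.
\end{align}
Equivalently, $(\G^{\top}\G\boldsymbol{\alpha})_r\,\alpha_r=1/\mu$ for each $r$, where $\mu>0$ is the multiplier from Theorem~\ref{proof:optimality}.

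The second step removes the scalar $\mu$ by rescaling. Put $\boldsymbol{\beta}=\sqrt{\mu}\,\boldsymbol{\alpha}$. Then $(\G^{\top}\G\boldsymbol{\beta})_r\beta_r=\mu(\G^{\top}\G\boldsymbol{\alpha})_r\alpha_r=1$, so $\boldsymbol{\beta}$ solves $\G^{\top}\G\boldsymbol{\beta}=1/\boldsymbol{\beta}$ elementwise. Moreover $\G\boldsymbol{\beta}$ is a positive multiple of $\boldsymbol{\tilde{g}}^{*}$. This gives the claim \emph{up to scaling}, since the norm is fixed afterwards by the ball constraint $\|\boldsymbol{\tilde{g}}^{*}\|=\epsilon$. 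I would also check the converse. Given any positive solution $\boldsymbol{\alpha}$ of Equation~\ref{proof:obj}, set $\boldsymbol{\tilde{g}}=\G\boldsymbol{\alpha}$. Then $u^r=1/\alpha_r>0$ and $\nabla L=\sum_r\alpha_r\g_r=\boldsymbol{\tilde{g}}$, so it satisfies Equation~\ref{proof:opt} with $\mu=1$. Since $L$ is concave on the feasible cone, this stationary direction is the maximizer after scaling to the ball.

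The main obstacle is the coefficient-matching step when $\g_I$ and $\g_T$ are collinear. In that case the representation $\G\boldsymbol{\alpha}$ is not unique, and one cannot read off $\mu\alpha_r=1/u^r$ componentwise. I would handle this by \emph{defining} $\alpha_r:=1/(\mu\,u^r(\boldsymbol{\tilde{g}}^{*}))$ directly, as in the proof of Theorem~\ref{proof:optimality}. With this definition the identity $\G\boldsymbol{\alpha}=\boldsymbol{\tilde{g}}^{*}$ holds by the optimality condition itself, regardless of uniqueness. The relation $(\G^{\top}\G\boldsymbol{\alpha})_r=\g_r^{\top}\boldsymbol{\tilde{g}}^{*}/\ldots$ then follows by substitution. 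Positivity of $\alpha_r$ comes from $u^r>0$ on the feasible set. Existence of a feasible point with $u^r>0$ is assumed implicitly by the objective being finite, and I would state this as a standing assumption.
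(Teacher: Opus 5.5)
Your argument is correct and follows essentially the paper's proof: the paper defines $\alpha_r = 1/u^r(\boldsymbol{\tilde{g}}^{*})$ directly from the optimality condition, as in your collinear-case fallback, and multiplies $\boldsymbol{\tilde{g}}^{*}=\G\boldsymbol{\alpha}$ by $\g_r^{\top}$ to read off $(\G^{\top}\G\boldsymbol{\alpha})_r = 1/\alpha_r$. Your explicit rescaling $\boldsymbol{\beta}=\sqrt{\mu}\,\boldsymbol{\alpha}$ makes precise what the paper only calls ``normalizing the scalar factor,'' and your converse check via concavity goes beyond what the paper proves.
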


\begin{proof} 
We follow the same steps in Theorem 3.2 of \cite{Nash:conf/nips/0005YCF0J024}. From Theorem 2.3 (Optimality Condition), any Nash Bargaining Solution $\boldsymbol{\tilde{g}^*}$ must satisfy
\begin{equation}
\begin{aligned}
\boldsymbol{\tilde{g}}^{*}=\G\boldsymbol{\alpha}.
\end{aligned}
\end{equation}
Multiplying both sides with $\g_r$, yields
\begin{equation}
\begin{aligned}
{\g_r}^{\top}\boldsymbol{\tilde{g}}^{*}={\g_r}^{\top}\G\boldsymbol{\alpha}.
\end{aligned}
\end{equation}
Since $\alpha_r=\frac{1}{u^r(\boldsymbol{\tilde{g}^*})}>0$, we obtain
\begin{equation}
\begin{aligned}
\G^{\top}\G\boldsymbol{\alpha}=\frac{1}{\boldsymbol{\alpha}}.
\end{aligned}
\end{equation}
This completes the proof.
\end{proof}

\begin{lemma}[Boundedness]\label{proof:Boundedness}
If $\g_r$ is $\sigma$-bounded for $r \in \{ I, T\}$, then $\frac{1}{\sqrt{2}\sigma} \leq \left \| \alpha_T \right \|$ and $\frac{\sqrt{p}}{\sqrt{2}\sigma} \leq \left \| \alpha_I \right \|$, where $\sigma < \infty$ and preference $p>1$.
\end{lemma}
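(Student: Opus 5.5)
We start from the constraints the weights satisfy after each step. After the first step (Equation \ref{opt:balance}) the constraint for each $r\in\{I,T\}$ gives $\alpha_r(\S\boldsymbol{\alpha})_r\ge 1$. After the second step (Equation \ref{opt:transfer}) the relaxed constraint gives $\alpha_r(\S\boldsymbol{\alpha})_r\ge p$. The tabular weight $\alpha_T$ is frozen at its first-step value, so only the bound $\alpha_T(\S\boldsymbol{\alpha})_T\ge 1$ is used for it. The image weight is re-optimised under the preference-shifted constraint, so we use $\alpha_I(\S\boldsymbol{\alpha})_I\ge p$.

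Next we bound $(\S\boldsymbol{\alpha})_r=\g_r^{\top}\G\boldsymbol{\alpha}$ from above using $\sigma$-boundedness, i.e.\ $\|\g_r\|\le\sigma$. Cauchy--Schwarz gives $\g_r^{\top}\g_{r'}\le\sigma^2$, hence
\begin{align}
(\S\boldsymbol{\alpha})_r=\sum_{r'}\alpha_{r'}\,\g_r^{\top}\g_{r'}\le\sigma^2(\alpha_I+\alpha_T)\le\sqrt{2}\,\sigma^2\|\boldsymbol{\alpha}\|,
\end{align}
where the last step uses $\|\boldsymbol{\alpha}\|_1\le\sqrt{2}\|\boldsymbol{\alpha}\|_2$ in dimension two. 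Substituting into the constraints yields
\begin{align}
1\le\alpha_T(\S\boldsymbol{\alpha})_T\le\sqrt{2}\sigma^2\alpha_T\|\boldsymbol{\alpha}\| \quad\text{and}\quad p\le\sqrt{2}\sigma^2\alpha_I\|\boldsymbol{\alpha}\|.
\end{align}

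The main obstacle is turning these product bounds into bounds on each weight separately, since $\|\boldsymbol{\alpha}\|$ couples the coordinates. The constants $\tfrac{1}{\sqrt2\sigma}$ and $\tfrac{\sqrt p}{\sqrt2\sigma}$ suggest the intended route (as in the boundedness lemma of Nash-MTL) is a norm-level bound. Summing the squared constraints along the lines of $\|\S\boldsymbol{\alpha}\|^2\ge\sum_r\alpha_r^{-2}$ and using $\|\S\boldsymbol{\alpha}\|\le\|\S\|\|\boldsymbol{\alpha}\|\le2\sigma^2\|\boldsymbol{\alpha}\|$ gives $\|\boldsymbol{\alpha}\|\ge$ const$/\sigma$. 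The per-coordinate bound then needs an extra symmetry argument: at the balanced (Nash) point the weights are comparable, so $\alpha_r\approx\|\boldsymbol{\alpha}\|/\sqrt2$.

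I will first try the cleaner route of bounding $(\S\boldsymbol{\alpha})_r\le\sigma^2\cdot2\max_{r'}\alpha_{r'}$. In the regime where $\alpha_r$ is the largest coordinate this gives $2\sigma^2\alpha_r^2\ge1$, i.e.\ $\alpha_r\ge\tfrac{1}{\sqrt2\sigma}$, and with $p$ it gives $\alpha_I\ge\sqrt{p}/(\sqrt2\sigma)$. The case where the other coordinate dominates is the gap. There I would invoke the first-step near-equality $\alpha_I(\S\boldsymbol{\alpha})_I\approx\alpha_T(\S\boldsymbol{\alpha})_T$ from Theorem \ref{thm:solution}, together with the fact that the second step only increases $\alpha_I$ by relaxing its lower bound. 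I expect the final write-up to rely on this balanced-point identity as an assumption rather than a fully general inequality.
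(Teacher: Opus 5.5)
Your argument has a genuine gap at exactly the step you flag, and it cannot be closed with the ingredients you use.

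First, the inequality constraints $\alpha_r(\S\boldsymbol{\alpha})_r\ge 1$ (or $\ge p$) imply no per-coordinate lower bound. Take $\g_I=\g_T=\g$ with $\|\g\|=\sigma$, so $(\S\boldsymbol{\alpha})_r=\sigma^2(\alpha_I+\alpha_T)$, and set $\alpha_T=\varepsilon$, $\alpha_I=1/(\varepsilon\sigma^2)$. Then $\alpha_T(\S\boldsymbol{\alpha})_T=1+\varepsilon^2\sigma^2\ge 1$ and $\alpha_I(\S\boldsymbol{\alpha})_I=1+1/(\varepsilon^2\sigma^2)\ge p$ for small $\varepsilon$, yet $\alpha_T\to 0$. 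So the ``other coordinate dominates'' case is not a loose end: the conclusion simply does not follow from your hypotheses. An equality from the bargaining solution is indispensable, and approximate balance taken as an assumption is not a proof.

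Second, even with the right equality in hand, your entrywise bound $\g_r^{\top}\g_{r'}\le\sigma^2$ turns $(\S\boldsymbol{\alpha})_r$ into $\sigma^2\|\boldsymbol{\alpha}\|_1$. That re-couples the coordinates, and it is what forces your case split.

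The missing idea, which is the whole of the paper's proof, has two parts. First, use the exact identity $\S\boldsymbol{\alpha}=1/\boldsymbol{\alpha}$ of Theorem \ref{thm:solution} to fix the norm of the combined gradient: $\|\G\boldsymbol{\alpha}\|^2=\boldsymbol{\alpha}^{\top}\S\boldsymbol{\alpha}=\sum_r\alpha_r\cdot\alpha_r^{-1}=2$. Second, apply Cauchy--Schwarz to the full inner product rather than to Gram entries: $1/\alpha_T=(\S\boldsymbol{\alpha})_T=\g_T^{\top}(\G\boldsymbol{\alpha})\le\|\g_T\|\,\|\G\boldsymbol{\alpha}\|\le\sqrt{2}\,\sigma$. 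This decouples the coordinates at once and needs no case analysis.

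For $\alpha_I$ the paper posits the preference-scaled identity $\S\boldsymbol{\alpha}=p/\boldsymbol{\alpha}$ in both coordinates. Then $\|\G\boldsymbol{\alpha}\|^2=2p$ and $p/\alpha_I\le\sqrt{2p}\,\sigma$, i.e.\ $\alpha_I\ge\sqrt{p}/(\sqrt{2}\,\sigma)$.

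Your own reading ($\alpha_T$ frozen at its step-one value, only the $I$-constraint active at $p$) also works with this route:
\begin{itemize}
\item The $\alpha_T$ bound is obtained at the step-one solution, where $\alpha_T$ was set.
\item For $\alpha_I$ you get $\|\G\boldsymbol{\alpha}\|^2=p+\alpha_T(\S\boldsymbol{\alpha})_T$. This is still at most $2p$. Writing $\alpha_I^{(1)}$ for the step-one image weight, the step-one identity gives $\alpha_T^2\|\g_T\|^2=(\alpha_I^{(1)})^2\|\g_I\|^2$. Hence $p-\alpha_T(\S\boldsymbol{\alpha})_T=\|\g_I\|^2\bigl(\alpha_I^2-(\alpha_I^{(1)})^2\bigr)\ge 0$, since raising the target from $1$ to $p$ can only increase $\alpha_I$.
\end{itemize}
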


\begin{proof} 
For $\alpha_T$, because of $\G^{\top}\G\boldsymbol{\alpha}=\frac{1}{\boldsymbol{\alpha}}$, we have
\begin{equation}
\begin{aligned}
{\left \| \boldsymbol{\tilde{g}} \right \|}^{2}&=\boldsymbol{\tilde{g}}^{\top} \boldsymbol{\tilde{g}} \\
&=\left ( \G\boldsymbol{\alpha} \right ) ^{\top} \left ( \G\boldsymbol{\alpha} \right ) \\
&=\boldsymbol{\alpha}^{\top} \G^{\top} \G \boldsymbol{\alpha} \\
&=\boldsymbol{\alpha}^{\top} \frac{1}{\boldsymbol{\alpha}} \\
&=\sum_{r\in\{I,T\}} \alpha_r \cdot \frac{1}{\alpha_r}=2.
\end{aligned}
\end{equation}
By Cauchy-Schwarz inequality, we have
\begin{equation}
\begin{aligned}
\left \| \frac{1}{\alpha_T} \right \|=\left \|\g_T\G\boldsymbol{\alpha} \right \| \leq \left \|\G\boldsymbol{\alpha} \right \| \cdot \left \|\g_T \right \| \leq \sqrt{2}\sigma.
\end{aligned}
\end{equation}
Similarly, for $\alpha_I$, since $\G^{\top}\G\boldsymbol{\alpha}=\frac{p}{\boldsymbol{\alpha}}$, we have
\begin{equation}
\begin{aligned}
{\left \| \boldsymbol{\tilde{g}} \right \|}^{2}&=\boldsymbol{\alpha}^{\top} \frac{p}{\boldsymbol{\alpha}} \\
&=\sum_{r\in\{I,T\}} \alpha_r \cdot \frac{p}{\alpha_r}=2p.
\end{aligned}
\end{equation}
By Cauchy-Schwarz inequality, we have
\begin{equation}
\begin{aligned}
\left \| \frac{p}{\alpha_I} \right \|=\left \|\g_I\G\boldsymbol{\alpha} \right \| \leq \left \|\G\boldsymbol{\alpha} \right \| \cdot \left \|\g_I \right \| \leq \sqrt{2p}\sigma.
\end{aligned}
\end{equation}
Hence, we have 
\begin{equation}
\begin{aligned}
\frac{1}{\sqrt{2}\sigma} \leq \left \| \alpha_T \right \| \\
\frac{\sqrt{p}}{\sqrt{2}\sigma} \leq \left \| \alpha_I \right \|.
\end{aligned}
\end{equation}
This completes the proof.
\end{proof}

\begin{theorem}[Pareto improvement]\label{proof:Pareto}
Assume $\LM^r$ is differential and Lipschitz-smooth with constant $C>0$. If the learning rate at step $k$ is set to $\eta^{\left ( k\right )}\leq\frac{1}{C\alpha_{r}^{\left ( k\right )}}$, then the update ensures $\LM^r\left ( \Theta^{\left ( k+1\right )}\right ) \leq \LM^r \left ( \Theta^{\left ( k\right )}\right )$ for both players.
\end{theorem}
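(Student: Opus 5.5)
The argument is the standard descent lemma applied to each player's loss along the combined direction $\boldsymbol{\tilde{g}}^{(k)}=\G\boldsymbol{\alpha}^{(k)}$, with the update $\Theta^{(k+1)}=\Theta^{(k)}-\eta^{(k)}\boldsymbol{\tilde{g}}^{(k)}$.

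\textbf{Step 1 (smoothness).} Since $\LM^r$ is $C$-Lipschitz smooth, for each $r\in\{I,T\}$,
\begin{align}
\LM^r(\Theta^{(k+1)}) \le \LM^r(\Theta^{(k)}) - \eta^{(k)}\g_r^{\top}\boldsymbol{\tilde{g}}^{(k)} + \frac{C(\eta^{(k)})^2}{2}\|\boldsymbol{\tilde{g}}^{(k)}\|^2 .
\end{align}

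\textbf{Step 2 (rewrite via the bargaining solution).} The linear term is the utility $u^r(\boldsymbol{\tilde{g}}^{(k)})=(\S\boldsymbol{\alpha}^{(k)})_r$. By Theorem~\ref{thm:solution}, $(\S\boldsymbol{\alpha}^{(k)})_r=1/\alpha_r^{(k)}$ at the first step. For the skewed second step, the relaxed constraint in Equation~\ref{opt:transfer} gives $(\S\boldsymbol{\alpha})_r\ge p/\alpha_r \ge 1/\alpha_r$ when $p>1$. In either case $u^r\ge 1/\alpha_r^{(k)}>0$. For the quadratic term, write $\|\boldsymbol{\tilde{g}}\|^2=\boldsymbol{\alpha}^{\top}\S\boldsymbol{\alpha}=\sum_{r'}\alpha_{r'}(\S\boldsymbol{\alpha})_{r'}$. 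This equals $2$ at the symmetric solution, as computed in Lemma~\ref{lemma:Boundedness}, and equals $2p$ under the preference constraint if it is tight.

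\textbf{Step 3 (compare the terms).} Substituting into Step 1,
\begin{align}
\LM^r(\Theta^{(k+1)}) \le \LM^r(\Theta^{(k)}) - \frac{\eta^{(k)}}{\alpha_r^{(k)}} + \frac{C(\eta^{(k)})^2}{2}\|\boldsymbol{\tilde{g}}^{(k)}\|^2 .
\end{align}
Descent for player $r$ therefore reduces to $\frac{C\eta^{(k)}}{2}\|\boldsymbol{\tilde{g}}^{(k)}\|^2\le \frac{1}{\alpha_r^{(k)}}$. With $\eta^{(k)}\le 1/(C\alpha_r^{(k)})$ this holds whenever $\|\boldsymbol{\tilde{g}}^{(k)}\|^2\le 2$, which is exactly the normalized symmetric case.

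\textbf{Main obstacle.} The difficulty is the norm normalization in the asymmetric step, where $\|\boldsymbol{\tilde{g}}\|^2$ can reach $2p$. I would first try to keep the matching factor on the linear side. If the preference constraint is active for the player in question, its utility is $p/\alpha_r$, and the inequality $\frac{C\eta}{2}\cdot 2p\le p/\alpha_r$ again follows from $\eta\le 1/(C\alpha_r)$.

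For the tabular player, $\alpha_T$ is fixed, so only $u^T$ changes. Here I would use the constraint with $p$ for $r=T$ in Equation~\ref{opt:transfer}, which gives $u^T\ge p/\alpha_T$, and argue in the same way.

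If the constraints are only inequalities, I would bound $\|\boldsymbol{\tilde{g}}\|^2$ using $\alpha_{r'}(\S\boldsymbol{\alpha})_{r'}$, evaluated at the equality level implied by the first-order (KKT) conditions of the bargaining problem. I would state the scaling convention explicitly, namely that $\boldsymbol{\alpha}$ is normalized so that these products equal $1$ or $p$. This assumption is implicit in the paper, and I expect the proof to lean on it.
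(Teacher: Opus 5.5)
Your proposal follows the paper's proof essentially line for line: the descent lemma, then $u^r=p/\alpha_r^{(k)}$ and $\|\boldsymbol{\tilde{g}}^{(k)}\|^2=2p$ from the identity $\S\boldsymbol{\alpha}=p/\boldsymbol{\alpha}$ (which the paper simply asserts for the final weights), giving $-\eta^{(k)}p/\alpha_r^{(k)}+Cp(\eta^{(k)})^2\le 0$ whenever $\eta^{(k)}\le 1/(C\alpha_r^{(k)})$. One correction: this identity is not a scaling convention, since rescaling $\boldsymbol{\alpha}$ multiplies both products $\alpha_r(\S\boldsymbol{\alpha})_r$ by the same factor, and with $\alpha_T$ fixed the one-variable second step generically makes at most one constraint active, so requiring both constraints to be tight is a genuine assumption---the paper makes the same one without stating it, and without it the player with the smaller product is not guaranteed descent at the stated step size.
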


\begin{proof} 
$\LM^r$ is assumed to be Lipschitz-smooth with constant $C>0$, we can obtain $\left \| \nabla \LM^r(\Theta^{(k+1)})-\nabla \LM^r(\Theta^{(k)}) \right \| \leq C\left \| \Theta^{(k+1)}-\Theta^{(k)} \right \|$. Then, we employ Taylor's expansion of $\LM^r(\Theta^{(k+1)})$ around $\Theta^{(k)}$:
{\tiny
\begin{equation}
\begin{aligned}
&\LM^r(\Theta^{(k+1)})\\
&=\LM^r(\Theta^{(k)})+\int_{0}^{1} {\nabla \LM^r(\Theta^{(k)}+\tau (\Theta^{(k+1)}-\Theta^{(k)}))^{\top}(\Theta^{(k+1)}-\Theta^{(k)})\mathrm{d}\tau} \\
&=\LM^r(\Theta^{(k)})+\nabla \LM^r(\Theta^{(k)})^{\top}(\Theta^{(k+1)}-\Theta^{(k)}) \\
&+\int_{0}^{1} { (\nabla \LM^r(\Theta^{(k)}+\tau (\Theta^{(k+1)}-\Theta^{(k)}))-\nabla \LM^r(\Theta^{(k)}))^{\top}(\Theta^{(k+1)}-\Theta^{(k)}) \mathrm{d}\tau} \\
&\leq \LM^r(\Theta^{(k)})+\nabla \LM^r(\Theta^{(k)})^{\top}(\Theta^{(k+1)}-\Theta^{(k)}) \\
&+\int_{0}^{1} { \left \| \nabla \LM^r(\Theta^{(k)}+\tau (\Theta^{(k+1)}-\Theta^{(k)}))-\nabla \LM^r(\Theta^{(k)}) \right \| \cdot \left \|\Theta^{(k+1)}-\Theta^{(k)}\right \| \mathrm{d}\tau} \\
&\leq \LM^r(\Theta^{(k)})+\nabla \LM^r(\Theta^{(k)})^{\top}(\Theta^{(k+1)}-\Theta^{(k)}) \\
&+\int_{0}^{1} { C\left \| \tau (\Theta^{(k+1)}-\Theta^{(k)}) \right \| \cdot \left \|\Theta^{(k+1)}-\Theta^{(k)}\right \| \mathrm{d}\tau} \\
&= \LM^r(\Theta^{(k)})+\nabla \LM^r(\Theta^{(k)})^{\top}(\Theta^{(k+1)}-\Theta^{(k)})\\
&+C\left \| (\Theta^{(k+1)}-\Theta^{(k)}) \right \|^2 \int_{0}^{1} \tau\mathrm{d}\tau \\
&= \LM^r(\Theta^{(k)})+\nabla \LM^r(\Theta^{(k)})^{\top}(\Theta^{(k+1)}-\Theta^{(k)})+\frac{C}{2}\left \| \Theta^{(k+1)}-\Theta^{(k)} \right \|^2.
\end{aligned}
\end{equation}
}
For final combined gradient $\boldsymbol{\tilde{g}}$, since $\G^{\top}\G\boldsymbol{\alpha}=\frac{p}{\boldsymbol{\alpha}}$, we have 
\begin{equation}
\begin{aligned}
{\left \| \boldsymbol{\tilde{g}} \right \|}^{2}&=\boldsymbol{\tilde{g}}^{\top} \boldsymbol{\tilde{g}} \\
&=\left ( \G\boldsymbol{\alpha} \right ) ^{\top} \left ( \G\boldsymbol{\alpha} \right ) \\
&=\boldsymbol{\alpha}^{\top} \G^{\top} \G \boldsymbol{\alpha} \\
&=\boldsymbol{\alpha}^{\top} \frac{p}{\boldsymbol{\alpha}} \\
&=\sum_{r\in\{I,T\}} \alpha_r \cdot \frac{p}{\alpha_r}=2p.
\end{aligned}
\end{equation}
Then, with $\Theta^{(k+1)}=\Theta^{(k)}-\eta^{(k)}\boldsymbol{\tilde{g}}^{(k)}$ and $\nabla \LM^r(\Theta^{(k)})=\g_{r}^{k}$, we have
\begin{equation}
\begin{aligned}
&\LM^r(\Theta^{(k+1)})\\
&\leq \LM^r(\Theta^{(k)})-\eta^{(k)}(\g_{r}^{k})^{\top}\boldsymbol{\tilde{g}}^{(k)}+\frac{C}{2}\left \| \eta^{(k)}\boldsymbol{\tilde{g}} \right \|^2 \\
&=\LM^r(\Theta^{(k)})-\eta^{(k)} \cdot \frac{p}{\alpha_{r}^{(k)            }}+Cp(\eta^{(k)})^2 \\
&\leq \LM^r(\Theta^{(k)})+\eta^{(k)}p \cdot (C\frac{1}{C\alpha_{r}^{(k)}}-\frac{1}{\alpha_{r}^{(k)}}) \\
&\leq \LM^r(\Theta^{(k)}).
\end{aligned}
\end{equation}
This completes the proof.
\end{proof}
The final solution satisfies Pareto improvement. With $\alpha_T$ fixed in the second bargaining, the resulting combined gradient $\boldsymbol{\tilde{g}}$ update is effectively biased toward the images.

\begin{theorem}[Convergence]\label{proof:Convergence}
Assume unimodal loss $\LM^r\left ( \Theta^{\left ( k\right )}\right )$ is monotonically decreasing and bounded below. Then the multimodal loss $\LM\left ( \Theta\right )$ converges to $\LM\left ( \Theta^*\right )$ and $\Theta^*$ is the stationary point of $\LM\left ( \Theta\right )$. 
\end{theorem}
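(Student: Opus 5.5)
The plan has two parts: first show that the sequence of multimodal loss values converges, then show that the limit point is stationary.

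\textbf{Convergence of the loss values.} By assumption, each unimodal sequence $\{\LM^r(\Theta^{(k)})\}_k$, $r\in\{I,T\}$, is monotonically non-increasing and bounded below. Under the step size of Theorem~\ref{proof:Pareto}, this is exactly the Pareto improvement established there. By the monotone convergence theorem, each sequence converges to a limit $\LM^{r,*}$. Since $\LM=\LM^I+\LM^T$, the sequence $\LM(\Theta^{(k)})$ is also non-increasing and bounded below, and it converges to $\LM^{I,*}+\LM^{T,*}$.

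\textbf{Existence of a limit point $\Theta^*$.} To identify this limit with $\LM(\Theta^*)$, I will assume the iterates stay in a bounded region, for instance a compact sublevel set of $\LM$. Bolzano--Weierstrass then gives a convergent subsequence $\Theta^{(k_j)}\to\Theta^*$. Continuity of $\LM$, which follows from differentiability, yields $\LM(\Theta^*)=\lim_k \LM(\Theta^{(k)})$.

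\textbf{Stationarity.} I will sharpen the descent inequality from the proof of Theorem~\ref{proof:Pareto}. Take $\eta^{(k)}\le \frac{1}{2C\alpha_r^{(k)}}$, strictly inside the admissible range. Then
\begin{align}
\LM^r(\Theta^{(k)})-\LM^r(\Theta^{(k+1)}) \ge \eta^{(k)}\frac{p}{\alpha_r^{(k)}}-Cp(\eta^{(k)})^2 \ge \frac{\eta^{(k)}p}{2\alpha_r^{(k)}} .
\end{align}
Summing over $k$ telescopes to a finite quantity, so $\sum_k \eta^{(k)} p/\alpha_r^{(k)}<\infty$. If the step sizes are bounded below, $\eta^{(k)}\ge\underline{\eta}>0$, this gives $u^r(\boldsymbol{\tilde{g}}^{(k)})=p/\alpha_r^{(k)}\to 0$. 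Lemma~\ref{proof:Boundedness} keeps $\alpha_r^{(k)}$ away from zero, which makes this lower bound on $\eta^{(k)}$ compatible with the step-size condition.

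The key step is to pass from vanishing utilities to a vanishing gradient. From $\S\boldsymbol{\alpha}=p/\boldsymbol{\alpha}$ I get $\|\boldsymbol{\tilde{g}}^{(k)}\|^2=\sum_r \alpha_r^{(k)}u^r(\boldsymbol{\tilde{g}}^{(k)})$. Using also $\|\boldsymbol{\tilde{g}}^{(k)}\|^2=2p$, the Nash system gives
\begin{align}
\nabla\LM(\Theta^{(k)})=\g_I+\g_T \in \mathrm{span}\{\boldsymbol{\tilde{g}}^{(k)}\}+\ker\G^\top .
\end{align}
On the column space of $\G$, the vector $\boldsymbol{\tilde{g}}$ is proportional to $\nabla L$, with $\nabla L=\sum_r \g_r/u^r$.

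\textbf{Main obstacle.} The hardest step is the equivalence "utilities $\to 0$ $\Leftrightarrow$ $\g_I+\g_T\to 0$." It fails if the two gradients become large but nearly opposite, which is exactly the regime where no positive $\boldsymbol{\alpha}$ solves the bargaining system. I would first try the standard Nash-MTL argument. Invert $\S\boldsymbol{\alpha}=p/\boldsymbol{\alpha}$, and use the upper bound $\|\g_r\|\le\sigma$ together with the lower bounds of Lemma~\ref{proof:Boundedness} on $\alpha_r$. This should show that $\|\G\boldsymbol{\alpha}\|^2\to0$ forces $\min_{\boldsymbol{\beta}\in\Delta}\|\G\boldsymbol{\beta}\|\to0$. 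Hence $\Theta^*$ is Pareto-stationary, meaning a convex combination of the $\g_r$ vanishes there.

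If needed, I would add the mild assumption that the gradients are linearly independent away from stationary points, as in \cite{Nash:conf/icml/NavonSAMKCF22}. This upgrades Pareto-stationarity to the claim that $\Theta^*$ is a stationary point of $\LM$. Finally, continuity of $\nabla\LM^r$ transfers the vanishing condition along the subsequence $\Theta^{(k_j)}$ to the limit $\Theta^*$.
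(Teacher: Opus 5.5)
\textbf{Genuine gaps.} You take a different route from the paper. The paper applies the smoothness bound directly to $\LM$, replaces $\nabla\LM(\Theta^{(k)})^\top\boldsymbol{\tilde{g}}^{(k)}$ by $\|\boldsymbol{\tilde{g}}^{(k)}\|^2$, and reads off $\boldsymbol{\tilde{g}}=0$ at $\Theta^*$ from $\eta^{(k)}\|\boldsymbol{\tilde{g}}^{(k)}\|^2\to0$. Your per-modality telescoping plus a compactness assumption is the sounder, Nash-MTL-style skeleton, but your stationarity step fails as written, for three reasons.

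\emph{The step-size floor is inconsistent.} The floor $\eta^{(k)}\ge\underline{\eta}>0$ contradicts your own step rule. Since $\eta^{(k)}\le 1/(2C\alpha_r^{(k)})=u^r(\boldsymbol{\tilde{g}}^{(k)})/(2Cp)$, you get $u^r\ge 2Cp\underline{\eta}$ for every $k$. Your decrease bound then gives a fixed decrement $Cp\underline{\eta}^2$ per step, which contradicts boundedness below. So your hypotheses are vacuous rather than yielding $u^r\to0$. Lemma~\ref{proof:Boundedness} points the wrong way here. A lower bound on $\alpha_r$ caps the admissible step from above, whereas a floor on $\eta^{(k)}$ needs $\alpha_r^{(k)}$ bounded \emph{above}, which fails exactly when the utilities vanish.

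\emph{How to repair it.} Tie the step to the weights, e.g.\ $\eta^{(k)}=1/(2C\max_s\alpha_s^{(k)})$. Telescoping then gives $\sum_k(\max_s\alpha_s^{(k)})^{-2}<\infty$, hence $\|\boldsymbol{\alpha}^{(k)}\|\to\infty$. The mechanism is not ``$\|\G\boldsymbol{\alpha}\|^2\to0$''; you note yourself that this quantity equals $2p$. Instead, the normalized weights $\hat{\boldsymbol{\alpha}}^{(k)}=\boldsymbol{\alpha}^{(k)}/\|\boldsymbol{\alpha}^{(k)}\|\ge0$ satisfy $\|\G\hat{\boldsymbol{\alpha}}^{(k)}\|^2=2p/\|\boldsymbol{\alpha}^{(k)}\|^2\to0$. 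A further subsequence and continuity of $\nabla\LM^r$ give a nonzero $\hat{\boldsymbol{\alpha}}^*\ge0$ with $\G(\Theta^*)\hat{\boldsymbol{\alpha}}^*=0$. That is Pareto stationarity, and it needs no linear-independence assumption.

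\emph{Your displayed inclusion is false.} Since $\g_I+\g_T\in\mathrm{range}\,\G\perp\ker\G^\top$, it would force $\g_I+\g_T\parallel\alpha_I\g_I+\alpha_T\g_T$. Also, $\nabla L=\boldsymbol{\tilde{g}}/p$ is the gradient of the bargaining objective, not of $\LM$.

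\textbf{The final upgrade cannot be repaired.} Your last step, from Pareto stationarity to $\nabla\LM(\Theta^*)=\g_I+\g_T=0$, does not follow from the linear-independence assumption of \cite{Nash:conf/icml/NavonSAMKCF22}. That assumption only turns linear dependence into Pareto stationarity; it says nothing about the vanishing combination having equal weights. In fact the claim is false under the theorem's own hypothesis. Take $\LM^I=\tfrac12\|\Theta-a\|^2$, $\LM^T=\tfrac12\|\Theta-b\|^2$, and $\Theta^{(0)}$ with $\LM^I(\Theta^{(0)})<\tfrac18\|a-b\|^2=\LM^I(\tfrac{a+b}{2})$. Monotonicity of $\LM^I$ along the iterates gives every limit point $\Theta^*$ the bound $\LM^I(\Theta^*)\le\LM^I(\Theta^{(0)})$. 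Hence $\Theta^*\neq\tfrac{a+b}{2}$, which is the unique stationary point of $\LM$.

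So what you can actually prove is that $\Theta^*$ is Pareto stationary. The paper's proof does not deliver more either. There, $\boldsymbol{\tilde{g}}=\alpha_I\g_I+\alpha_T\g_T=0$ is not $\nabla\LM=0$ unless $\alpha_I=\alpha_T$. Moreover, under its own normalization $\|\boldsymbol{\tilde{g}}\|^2=2p$, so $\boldsymbol{\tilde{g}}$ never vanishes. The paper effectively ends, as it finally states, at a Pareto stationary point.
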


\begin{proof} 
Let $\left \|\g \right \|\leq\sigma=1.0$ to ensure stability during optimization. Since the lower bound of $\alpha_T$ is smaller than that of $\alpha_I$, so learning rate $\eta^{\left ( k\right )}\leq\frac{1}{C\alpha_{r}^{\left ( k\right )}}\leq \frac{\sqrt{2}\sigma}{C}\leq \frac{\sqrt{2}}{C}< \frac{2}{C}$.
By Theorem \ref{proof:Pareto}, for the multimodal loss $\LM$, we have
\begin{equation}
\begin{aligned}
&\LM(\Theta^{(k+1)})\\
&\leq \LM(\Theta^{(k)})-\eta^{(k)}(\nabla \LM(\Theta^{(k)}))^{\top}\boldsymbol{\tilde{g}}^{(k)}+\frac{C}{2}\left \| \eta^{(k)}\boldsymbol{\tilde{g}} \right \|^2 \\
&= \LM(\Theta^{(k)})-\eta^{(k)}{\left \| \boldsymbol{\tilde{g}} \right \|}^{2}+\frac{C}{2}\left \| \eta^{(k)}\boldsymbol{\tilde{g}} \right \|^2 \\
&=LM(\Theta^{(k)})-\eta^{(k)}{\left \| \boldsymbol{\tilde{g}} \right \|}^{2}(\frac{C}{2} \eta^{(k)} -1) \\
&<\LM(\Theta^{(k)}).
\end{aligned}
\end{equation}

Since $\LM(\Theta^{(k)})$ is monotonically decreasing and bounded below by 0, it must converge to some limit value $\LM\left ( \Theta^*\right )$. This implies that as $k\rightarrow \infty$, the improvement in loss $\eta^{(k)}{\left \| \boldsymbol{\tilde{g}}^{k} \right \|}^{2}\rightarrow 0$. Hence, we have $\boldsymbol{\tilde{g}}=0$ at $\Theta^*$,  indicating that $\Theta^*$ is the stationary point of $\LM(\Theta)$ and the modality gradients are linearly dependent. Any small movement from $\Theta^*$ will improve another player only at the expense of the other, therefore $\Theta^*$ is the Pareto stationary point.

This completes the proof.
\end{proof}

\section{Notation Definition}
We summarize the notation definition we used in this paper in Table \ref{tab:notation}.

\begin{table}[t]
\centering
\begin{tabular}{l|l}
\hline
\textbf{Notation} & \textbf{Description} \\
\hline
$N$                    & The number of training data. \\
$Y$                    & The number of category labels. \\
$\x_i^I / \x_i^T$      & Image / Tabular data point. \\
$\DM$          & Training set. \\
$y_i$          & Category label of $i$-th data. \\
$r$        & $r$-modality. \\
$\phi^r$               & Encoder of $r$ modality. \\
$\theta^r$             & Parameters of $\phi^r$. \\
$\z^r$                  & Feature of $\x^r$. \\
$\psi$               & Multimodal shared head. \\
$\Theta$          & Parameters of $\psi$. \\
$f^r$             & Model of $r$ modality. \\
$f^r(\x^r)$                  & Prediction of $\x^r$. \\
$\ell(\cdot)$          & Task loss (Cross-entropy or MSE). \\
$\LM$ & Multimodal loss. \\
$\LM^r$   & Unimodal loss. \\
$\g_r$                    & $r$ modality gradient for shared head. \\
$\G$                    & Gradient matrix. \\
$\alpha_r$               & $r$ modality gradient weight. \\
$\boldsymbol{\alpha}$               & Gradient weights matrix. \\
$u^r$                & $r$ modality utility function. \\
$\boldsymbol{\tilde{g}}$  & Multimodal combined gradient. \\
$\B_{\epsilon}$              & Ball of radius $\epsilon$. \\
$\S$                  & Gram matrix of gradients. \\
$\gamma(\cdot)$       & First-order regularization. \\
$p$                   & Preference constant. \\
$k$                   & Iteration step. \\
$\sigma$              & Bound of gradient norm. \\
$C$                   & Lipschitz constant. \\
$\eta$                & Learning rate. \\
\bottomrule
\end{tabular}
\caption{Notation Definition}
\label{tab:notation}
\end{table}

\begin{table*}[ht]
\centering
\renewcommand\arraystretch{1.3}
\resizebox{\textwidth}{!}{
\begin{tabular}{lc c c c c c}
\toprule
\multirow{1}{*}{\textbf{Method}} & \multicolumn{1}{c}{\textbf{DVM} $\uparrow$} & \multicolumn{1}{c}{\textbf{SUN} $\uparrow$} & \multicolumn{1}{c}{\textbf{CelebA} $\uparrow$} & \multicolumn{1}{c}{\textbf{Adoption} $\uparrow$} & \multicolumn{1}{c}{\textbf{Pawpularity} $\downarrow$} & \multicolumn{1}{c}{\textbf{Avito} $\downarrow$} \\
\midrule
Unimodal & 0.8743 \std{.0014} & 0.8361 \std{.0003} & 0.8146 \std{.0005} & 0.3477 \std{.0053} & 18.6150 \std{.0072} & 0.2512 \std{.0011} \\
\midrule
KD     & 0.8390 \std{.0076} & 0.8382 \std{.0063} & 0.8118 \std{.0046} & 0.3532 \std{.0035} & 19.0683 \std{1.7642} & 0.2499 \std{.0015} \\
MFH    &   -    & 0.8312 \std{.0022} & 0.7507 \std{.0034} & 0.3041 \std{.0027} & 43.1455 \std{2.0843} & 0.2873 \std{.0047} \\
FMR    & 0.8427 \std{.0151} & 0.8347 \std{.0119} & 0.8003 \std{.0143} & 0.3526 \std{.0088} & 19.3517 \std{1.5837} & 0.2937 \std{.0084} \\
MMCL   & 0.8203 \std{.0040} & 0.8431 \std{.0012} & 0.8041 \std{.0017} & 0.2981 \std{.0026} &    -    &    -   \\
CHARMS & \underline{0.9175} \std{.0052} & \underline{0.8661} \std{.0032} & \underline{0.8220} \std{.0022} & \underline{0.3603} \std{.0037} & \underline{18.4314} \std{.7427} & \underline{0.2495} \std{.0025} \\
\midrule
Baseline    & {0.8552} \std{.0049} & {0.8438} \std{.0037} & {0.8006} \std{.0056} & {0.3514} \std{.0031} & {19.1438} \std{1.2418} & {0.2539} \std{.0038} \\ \midrule
SKT    & {\bf0.9664  \std{0.0012}} & {\bf0.8682  \std{0.0011}} & {\bf0.8273  \std{0.0014}} & {\bf0.3610  \std{0.0043}} & {\bf18.2809  \std{0.0578}} & {\bf0.2437  \std{0.0007}} \\
\bottomrule
\end{tabular}
}
\caption{Comparisons with baseline methods on DVM, SUNAttribute, CelebA, Adoption, Pawpularity, and Avito datasets. The best and second-best results are highlighted in bold and underline, respectively.}
\label{proof:main-exp}
\end{table*}

\section{Experiments}
\subsection{Datasets}
The datasets used in our experiments are Data Visual Marketing (DVM), SUNAttribute, CelebA, PetFinder-adoption (Adoption), PetFinder-pawpularity (Pawpularity) and Avito.

\noindent{\textbf{DVM.}} The DVM \cite{DVM:conf/bigdataconf/HuangCLYO22} dataset  was created from 335,562 used car advertisements sourced from a car application. The DVM dataset contains 1,451,784 car images paired with corresponding tabular data, including sales and technical specifications. It consists of two data modalities: image data and tabular data, forming a total of 176,451 paired samples across 129 vehicle categories. All images are resized to a resolution of $128 \times 128$. The tabular data contains 17 features, including 4 categorical features and 13 continuous features. Following previous work \cite{CHARMS:conf/icml/JiangYW00Z24}, we preprocessed the data to obtain 70,580 training pairs, 17,645 validation pairs, and 88,226 test pairs. Car models with less than 700 samples were removed, resulting in 129 target classes.

\noindent{\textbf{SUNAttribute.}} The SUNAttribute \cite{SUN:journals/ijcv/PattersonXSH14} dataset contains 14,340 tabular-image pairs, constructed by sampling 20 annotated scenes from each of 717 SUN categories. The task is to predict whether a scene represents an open space. All images are resized to a resolution of $225 \times 225$. The corresponding tabular data comprises 101 categorical features. We follow the preprocessing procedure described in \cite{CHARMS:conf/icml/JiangYW00Z24}. The dataset is split into training, validation, and test sets in an 8:1:1 ratio.

\noindent{\textbf{CelebA.}} The CelebA \cite{CelebA:conf/iccv/LiuLWT15} dataset comprises facial images of 10,177 celebrities, forming a total of 202,599 tabular-image pairs. In this dataset, the "Attractive" label is used for classification. All images are uniformly resized to a resolution of 225×225 to ensure consistency. The tabular data consists of 39 categorical features, which are manually curated attributes derived from the facial images (e.g., Big\_Nose, etc.). Data preprocessing is performed according to the procedures described in \cite{CHARMS:conf/icml/JiangYW00Z24}. The dataset is also divided into training, validation, and test sets in an 8:1:1 ratio.

\noindent{\textbf{PetFinder-adoption.}} The Adoption \cite{CHARMS:conf/icml/JiangYW00Z24} dataset comes from a Kaggle competition where the task is to predict the speed at which a pet is adopted, which is a five-class classification task. There are total of 10 numerical variables and 14 categorical variables in this dataset. Tabular data contains information about the pet such as the type and vaccination status.

\noindent{\textbf{PetFinder-pawpularity.}} The Pawpularity \cite{CHARMS:conf/icml/JiangYW00Z24} dataset also comes from a Kaggle competition where the task was to predict the popularity of a pet based on that pet’s profile and photo, which is a regression task. Each pet photo is labeled with the value of 1 (Yes) or 0 (No) for each of the features. There are 12 categorical variables in tabular data.

\noindent{\textbf{Avito.}} The Avito \cite{CHARMS:conf/icml/JiangYW00Z24} dataset comes from Russia’s largest classified advertisements website, Avito. This dataset aims to predict demand for an online advertisement based on its full description, contextual information, and historical demand for similar advertisements in comparable contexts. It’s a regression task. There are a total of 2 numerical variables and 11 categorical variables in tabular data.

\subsection{Baseline Details}
Beyond unimodal baselines, we compare against several knowledge transfer methods, including KD \cite{KD:journals/corr/HintonVD15}, MFH \cite{MFH:conf/iclr/XueGRZ23}, FMR \cite{FMR:conf/aaai/YangZFJZ17}, MMCL \cite{MMCL:conf/cvpr/HagerMR23}, and CHARMS \cite{CHARMS:conf/icml/JiangYW00Z24}. Our overall training protocol follows MMCL. For fair comparison, we apply the same backbone, initialization, and data preprocessing to KD, MFH, FMR, and MMCL. However, since CHARMS requires a specific tabular backbone and specialized data processing, we strictly follow the training settings from its original paper.

\noindent{\textbf{KD \cite{KD:journals/corr/HintonVD15}:}} We search the temperatures in \{1.0, 2.0, 4.0, 6.0, 8.0\} and distillation loss weight in \{0.2, 0.4, 0.6, 0.8\}.

\noindent{\textbf{MFH \cite{MFH:conf/iclr/XueGRZ23}:}} We set modality general decisive information according to the feature ranking algorithm. The number of features is $50\%$ that for all features.

\noindent{\textbf{FMR \cite{FMR:conf/aaai/YangZFJZ17}:}} We set $10\%$ of the fixed features to be knocked down in each epoch and search the knockdown number in \{0.1, 0.2, 0.3, 0.4\}.

\noindent{\textbf{MMCL \cite{MMCL:conf/cvpr/HagerMR23}:}} We follow the hyperparameter settings from its original paper.

\noindent{\textbf{CHARMS \cite{CHARMS:conf/icml/JiangYW00Z24}:}} We follow the hyperparameter settings from its original paper.

\subsection{More Experiment Results}
We run the experiments for our method five times with different initial random seeds to remove the randomness. Then, we report the average performance with $std$ on all datasets in the Table \ref{proof:main-exp}.

\subsection{More Analysis of Asymmetric Bargaining}
When different preference values are directly assigned to the tabular and image modalities, the lower bound of $\boldsymbol{\alpha}$ is correspondingly influenced by the preference values. Assume that the preference values for the tabular and image modalities are denoted by $\alpha_T$ and $\alpha_I$, respectively. According to Lemma \ref{proof:Boundedness}, the preference-based lower bound of $\boldsymbol{\alpha}$ is given by:
\begin{equation}
\begin{aligned}
\frac{p_T}{\sqrt{p_T+p_I}\sigma} \leq \left \| \alpha_T \right \| \\
\frac{p_I}{\sqrt{p_T+p_I}\sigma} \leq \left \| \alpha_I \right \|.
\end{aligned}
\end{equation}
Therefore, the lower bounds of $\alpha_T$ and $\alpha_I$ are influenced not only by their own preferences but also by cross-modal preferences. It may easily over-suppress or over-emphasize tabular information.

\end{document}